\newcommand{\spara}[1]{\smallskip\noindent{\bf #1}}
\DeclarePairedDelimiter\floor{\lfloor}{\rfloor}
\newtheorem{theorem}{Theorem}[section]
\newtheorem{example}{Example}
\newtheorem{definition}{Definition}
\newtheorem{corollary}{Corollary}[section]
\newtheorem{lemma}{Lemma}[section]
\newtheorem{problem}{Problem}
\DeclareRobustCommand{\calA}[0]{{\mathcal A}}
\DeclareRobustCommand{\calR}[0]{{\mathcal R}}
\DeclareRobustCommand{\calS}[0]{{\mathcal S}}
\DeclareRobustCommand{\calU}[0]{{\mathcal U}}
\DeclareMathOperator*{\argmax}{arg\,max}
\DeclareMathOperator*{\support}{support}
\DeclareMathOperator*{\softmin}{softmin}
\newcommand{\cov}[2]{{#1}[{#2}]}
\DeclareRobustCommand{\instance}[0]{{\mathcal T}}
\DeclareRobustCommand{\users}[0]{\calU}
\DeclareRobustCommand{\sols}[0]{\calS}
\newcommand{\scov}[1]{{#1\uparrow\,}}
\newcommand{\NPhard}{$\mathbf{NP}$-hard}
\newcommand{\squishlist}{
 \begin{list}{$\bullet$}
  {  \setlength{\itemsep}{0pt}
     \setlength{\parsep}{3pt}
     \setlength{\topsep}{3pt}
     \setlength{\partopsep}{0pt}
     \setlength{\leftmargin}{2em}
     \setlength{\labelwidth}{1.5em}
     \setlength{\labelsep}{0.5em}
} }
\newcommand{\squishlisttight}{
 \begin{list}{$\bullet$}
  { \setlength{\itemsep}{0pt}
    \setlength{\parsep}{0pt}
    \setlength{\topsep}{0pt}
    \setlength{\partopsep}{0pt}
    \setlength{\leftmargin}{2em}
    \setlength{\labelwidth}{1.5em}
    \setlength{\labelsep}{0.5em}
} }
\newcommand{\squishdesc}{
 \begin{list}{}
  {  \setlength{\itemsep}{0pt}
     \setlength{\parsep}{3pt}
     \setlength{\topsep}{3pt}
     \setlength{\partopsep}{0pt}
     \setlength{\leftmargin}{1em}
     \setlength{\labelwidth}{1.5em}
     \setlength{\labelsep}{0.5em}
} }
\newcommand{\squishend}{
  \end{list}
}
\newcommand{\mycomment}[1]{}
\newcommand{\reals}{{\mathbb R}}
\newcommand{\naturals}{{\mathbb N}}
\DeclareMathOperator*{\expect}{\mathbb{E}}
\newcommand{\rst}[1]{{\ensuremath{{\mathbin|}\raise-.9ex\hbox{${\scriptstyle{#1}}$}}}}
\newcommand{\subst}[1]{{\ensuremath{\raise-.9ex\hbox{${\scriptstyle{#1}}$}}}}
\newenvironment{prooftext}[1]{\par\noindent{\bf Proof#1.}\quad}{\nopagebreak$\qed$\\}
\newenvironment{proofof}[1]{\begin{prooftext}{ of #1}}{\end{prooftext}}
  \providecommand\BibTeX{{%
    \normalfont B\kern-0.5em{\scshape i\kern-0.25em b}\kern-0.8em\TeX}}}
\begin{document}
\fancyhead{}
\title{Maxmin-Fair Ranking: Individual Fairness under Group-Fairness Constraints}

\author{David Garc\'ia-Soriano }
\affiliation{%
\institution{ISI Foundation, Turin, Italy}
}
\email{d.garcia.soriano@isi.it}

\author{Francesco Bonchi}
\affiliation{%
	\institution{ISI Foundation, Turin, Italy}
\institution{Eurecat, Barcelona, Spain}
}
\email{francesco.bonchi@isi.it}

\begin{abstract}
We study a novel problem of fairness in ranking aimed at minimizing
the amount of individual unfairness introduced when enforcing group-fairness constraints.
Our proposal is rooted in the \emph{distributional maxmin fairness} theory, which uses randomization to maximize the expected satisfaction of the worst-off individuals.
We devise an exact polynomial-time algorithm 
to find maxmin-fair distributions of
general search problems (including, but not limited to, ranking),
        and show that our algorithm can produce rankings which, while satisfying the given
group-fairness constraints, ensure that the maximum possible value is brought to individuals.
\end{abstract}

\begin{CCSXML}
<ccs2012>
<concept>
<concept_id>10010147.10010257</concept_id>
<concept_desc>Computing methodologies~Machine learning</concept_desc>
<concept_significance>500</concept_significance>
</concept>
</ccs2012>
\end{CCSXML}

\ccsdesc[500]{Computing methodologies~Machine learning}
\keywords{fairness, ranking, max-min fairness}

\maketitle \sloppy

\section{Introduction}
\label{sec:intro}

As the position in a ranking influences to a great extent the amount of attention that an item receives,
biases in ranking can lead to unfair distribution of exposure, thus producing substantial economic impact.
If this is important when ranking items (e.g., web pages, movies, hotels, books), it raises even more crucial
concerns when ranking people. In fact, ranking is at the core of many decision-making processes in spheres such as health (e.g., triage in pandemic), education (e.g.,  university admission),  or employment (e.g., selection for a job), which can have a direct tangible impact on people's life. These concerns have captured the attention of researchers, which have thus started devising ranking systems which are \emph{fair}  for the items being ranked \cite{YangS17,CelisSV18,ZehlikeB0HMB17,SinghJ18,YangGS19,AsudehJS019,GeyikAK19}.

The bulk of the algorithmic fairness literature deals with \emph{group fairness} along the lines of \emph{demographic parity} \cite{fairness_awareness} or \emph{equal opportunity} \cite{HardtPNS16}:
this is typically expressed by means of some fairness constraint requiring that the top-$k$ positions (for any $k$) in the ranking contain enough elements from some  groups that are protected from discrimination based on sex, race, age, etc.
In fact, ~\cite{CelisMV20} shows that in a certain model, \emph{group-fairness constraints can eliminate the bias implicit in the ranking
    scores}.
    Besides, some legal norms enforce these constraints
\cite{norms1,norms2}.
For these reasons we will consider a ranking \emph{valid} if it satisfies
a given set of group-fairness constraints of this type, as detailed in Section~\ref{sec:problem}.

More formally, consider a set of elements (items or individuals) to be ranked $\users = \{u_1, \ldots, u_n\}$, a partition of $\users$ into groups defined by some protected attributes,
    and a relevance score $R: \users \rightarrow \reals^{\ge 0}$ for each element. For instance, $\users$ could be the result of a query while $R$ represents the relevance of each item
    for the query, or $\users$ could be the set of applicants for a job while $R$ their fitness for the job.
    Let $\mathcal{R}$ denote all possible rankings of $\users$ (bijections from $\calU$ to $[n]$), where $r(u) \in [n]$ denotes the
    position of element $u$ in a ranking $r \in \calR$ and let $\calS \subseteq \calR$ denote the subset of valid
        rankings satisfying the  agreed-upon constraints.
    Let $W(r,u)$ denote the
    \emph{utility} that placing $u$ at position $r(u)$ brings to the overall ranking: this is
   typically a function of the relevance score $R$, so that having higher relevance elements at top positions is rewarded. In other words, $W$ is such that, if  $r^*$ denotes the ranking by decreasing $R$, then $r^*$ is also the ranking maximizing the total utility (the so-called \emph{Probability Ranking Principle} \cite{prp} in Information Retrieval).
As the maximum-utility ranking $r^*$ might not satisfy the given group-fairness constraint, the problem typically addressed in the
literature is to find a valid ranking
which
    maximizes the global utility, i.e.,

\begin{equation}\label{eq:global}
{\tilde{r}}  \in \argmax_{r \in \mathcal{S}} \;\sum_{u \in \users} W(r,u).
\end{equation}

\begin{table}[h!]
  \centering
  \caption{Example instance. Top row: identifiers and protected attribute (gender). Bottom row: relevance score $R$.}\label{tab:running_example}
  \vspace{-3mm}
\begin{tabular}{cccccccc}
  \hline
\toprule
  $u_1, \mars$ & $u_2, \mars$ & $u_3, \venus$ & $u_4, \mars$ & $u_5, \mars$ &  $u_6, \venus$ & $u_7, \venus$  & $u_8, \venus$   \\
  0.97 & 0.93 & 0.89 & 0.81 & 0.73 & 0.72 & 0.64 & 0.62 \\
  \bottomrule
\end{tabular}
  \vspace{-3mm}
\end{table}
\begin{example}\label{ex:1}
Consider the case described in Table \ref{tab:running_example}
and suppose that the group-fairness constraint requires to have at least $\floor*{k/2}$ individuals of each gender in the top-$k$ positions starting from $k \geq 3$.

The ranking by decreasing relevance $r^* = \langle u_1, u_2, \ldots, u_8\rangle$ is not a valid ranking in this case, as $\venus$ is underrepresented in the top-$k$ positions for $k = 4, 5, 6$.
A valid ranking which is as close as possible to $r^*$ would be ${r'}  = \langle u_1, u_2, u_3, u_6, u_4, u_7, u_5, u_8\rangle$.
\end{example}

This approach stems from an information retrieval standpoint: the set of items to be ranked is the result of a query, and as long as the given group-fairness constraint is satisfied,
it suffices for the application at hand to maximize the global utility.
While at first sight this setting might seem adequate to rank people, maximizing global utility provides no guarantee to individuals, who care little about global utility. %it falls short when individuals' lives are at stake.
In Example \ref{ex:1}, individuals $u_4$ and $u_5$ have been uniquely penalized from a \emph{meritocratic fairness} point of view:
They may accept the group-fairness constraints and agree with the fact that the produced ranking $r$ is as close as possible to $r^*$, but nevertheless feel
discriminated against, for being
the only ones in a worse position in $r$ than in  $r^*$ despite other solutions being possible. For example,
$\langle u_4, u_1, u_3, u_6, u_2, u_7, u_5, u_8\rangle$ is valid and more favourable to $u_4$.
In other words, while the use of group-fairness constraints
    is often desirable and may be required by law,
    certain individuals in a such a valid ranking might feel unfairly penalized, even when comparing only to individuals within the same group.
   As soon as a group-fairness constraint is enforced in
ranking problems, some individual-level unfairness is inevitably introduced\footnote{This situation resembles some cases in fair
    classification in which enforcing statistical parity constraints cause a form of
    unfairness from an individual viewpoint~\cite{fairness_awareness}.}.

  In this paper we study \emph{the problem of minimizing
the amount of individual unfairness introduced when enforcing a group-fairness constraint}.
While much of the literature for ranking attempts to maximize global utility, global quality metrics generally fail to adequately capture
the treatment of \emph{individuals}.
Thus, differently from the literature which tries to maximize the global utility, we adopt Rawls's theory of justice~\cite{theory_justice}, which advocates arranging social and financial
inequalities to the benefit of the worst-off.
Following this precept, a natural task is to
    find a ranking that, while satisfying the group-fairness constraint, maximizes
the utility of the least-advantaged individual:

\begin{equation}\label{eq:individual}
{r'}  \in \argmax_{r \in \mathcal{S}} \min_{u \in \users} V(r,u).
\end{equation}
Here $V(r,u)$ represents the value (utility) that placing $u$ at position $r(u)$ brings to the individual $u$, \emph{relative to $u$'s quality $R(u)$}.

In Section~\ref{sec:baseline} we provide an exact optimal solution for~\eqref{eq:individual}. This, however, is not the main focus of our paper. In fact, we can improve individual treatment even further through \emph{randomization}.

\spara{{Randomization for individual fairness.}}
{We next show how, by means of randomization, we can improve individual treatment over the best deterministic solution of~\eqref{eq:individual}. In particular, we show that there exists a \emph{probability distribution over valid rankings}, where the minimum expected value that any individual gets is higher than is possible with any single ranking.}
\begin{example}\label{ex:2}
Consider the value function $V(r,u) = r^*(u) - r(u)$, i.e., the difference between the meritocratic ranking by relevance and the ranking produced. This is positive for individuals who are in a better (lower-ranked) position in $r$ w.r.t. $r^*$ and negative for
others.  It is easy to see that the ranking ${r'} $ in Example \ref{ex:1}  maximizes the minimum value of $V(r,u)$: in fact in order to have 3 $\venus$ in the first 6 positions,
some $\mars$
         has to give up at least 2 positions w.r.t. $r^*$.
\end{example}

Even when optimizing for~\eqref{eq:individual},
{individual} $u_5$ in Example \ref{ex:2} might have concerns for being the one receiving the largest part of the burden of satisfying the group-fairness constraint. The only way to improve on this situation is to
introduce {randomization} into the process. This means producing a probability distribution over possible valid rankings instead
of a single deterministic ranking.
\begin{example}\label{ex:3}
Consider the same instance of Example \ref{ex:1}. The following distribution over four rankings $r_{1-4}$ maximizes the minimum expected value of $V(r,u) = r^*(u) - r(u)$ among all individuals in $\users$:

\begin{center}
\begin{tabular}{l}
  \hline
\toprule
$\Pr(\langle u_1, u_4, u_3, u_7, u_2, u_6, u_5, u_8\rangle) = 1/4$ \\
$\Pr(\langle u_2, u_1, u_3, u_6, u_4, u_8, u_7, u_5\rangle) = 1/2$ \\
$\Pr(\langle u_2, u_1, u_3, u_7, u_5, u_6, u_4, u_8\rangle) = 1/16$ \\
$\Pr(\langle u_5, u_1, u_3, u_7, u_2, u_6, u_4, u_8\rangle) = 3/16$ \\
\bottomrule
\end{tabular}
\end{center}

It is easy to check that, under this distribution, everyone has expected value at least $-0.75$ (which is achieved by the four $\mars$), while under the best deterministic solution (Example  \ref{ex:2})
    we had $V(r,u_5) = -2 < -0.75$.
\end{example}

While in Example \ref{ex:2} the burden required for ensuring the group-fairness constraint  was all on $u_4$ and $u_5$, in Example \ref{ex:3} it has been equally distributed
    among the four $\mars$.
Notice that all four rankings in the distribution above satisfy the group-fairness constraint in Example~\ref{ex:1}.
    However, by
        combining these four rankings probabilistically,
                  we have succeeded in achieving a higher minimum expected value than is possible
    via any single deterministic ranking. In fact, we have also minimized the disparity in the expected value
    that each individual receives:
        whereas requiring all expected values to be the same is not mathematically possible when satisfying group constraints, the solution
     above comes as close as possible by minimizing the maximum gap.
A complete problem definition formalizing these ideas is given in Section~\ref{sec:problem}.

\spara{{Implications and practical deployment.}}
In order to guarantee the maximum possible value is brought to each individual, in this paper we embrace randomization and produce a probability distribution over possible valid
rankings. 
This distributional fairness approach is very well suited for a search context in which the same query can be served many times for different users of a platform (e.g., headhunters searching for a specific type of professional on a career-oriented social networking platform such as LinkedIn or XING).
Notice also that
\emph{amortized fairness} in the sense of~\cite{BiegaGW18,SinghJ18} is an immediate application of this distributional approach: if there are several rankings to be made, we can
draw them independently from a fair distribution of rankings, so that
the empirical properties of the sample approach those of the fair distribution.

However, the usefulness of randomization extends to settings with a
single, non-repeated trial (as in, e.g., university admissions). In this case it is an essential tool to secure ``ex-ante'' (procedural)
individual fairness, i.e.,
fairness of the procedure by which the outcome is selected, as opposed to ``ex-post'' fairness, which is based on the final outcome
alone (see, e.g.,~\cite{bolton2005fair}).

Regarding implementation and transparency issues, notice that instead of treating the algorithm as a black box outputting a single ranking, one can make the entire
distribution public.
For instance, we can publish the distribution described in in Example \ref{ex:3} above,
letting all the individuals verify the expected value, as well as the fact that this distribution is optimal under the maxmin-fair
criterion (see Section~\ref{sec:problem}). Then one of the four rankings $r_{1-4}$ can be picked at random, via any fair and transparent lottery mechanism or coin-tossing protocol.
Moreover, our algorithms guarantee that the optimal distribution found is supported on a small (polynomial-size) set of rankings, even if the space of all valid rankings is exponential.

\spara{{Paper contributions and roadmap.}}
In the rest of this paper, following the randomized \emph{maxmin-fairness framework}~\cite{dgs2020}, we study how to efficiently and accurately compute this type of
distributions over the rankings satisfying a given set of group-fairness constraints. We achieve the following contributions:
\squishlist
\item We introduce the distributional maxmin-fair ranking framework and provide the formal problem statement (Section \ref{sec:maxmin_general}). We show that maxmin-fair ranking distribution maintains within-group meritocracy and, in certain cases, it
has the desirable properties of being generalized Lorenz-dominant, and minimizing social inequality (Section \ref{subsec:properties}).

\item Our main result is an exact polynomial-time algorithm to find maxmin-fair distributions of many problems, including ranking (Section~\ref{sec:algorithms}).
A quicker method to find  maxmin-fair distributions approximately is explained in Appendix~\ref{sec:apx_fairness}.

\item We also provide an exact optimal solution (Section \ref{sec:baseline}) for the deterministic version of the problem as in (\ref{eq:individual}). This is achieved by a means of a variant of Celis et al.~\cite{CelisSV18}. We use this as a baseline allowing us to quantify the advantage of probabilistic rankings over the optimal deterministic ranking. 

\item Our experiments on two real-world datasets confirm empirically the advantage of probabilistic rankings over deterministic rankings in terms of minimizing the inequality for the worst-off individuals (Section~\ref{sec:experiments}).

\squishend

To the best of our knowledge, this is the first work studying the problem of minimizing
the amount of individual unfairness introduced when enforcing group-fairness constraints in ranking. A {major} contribution is showing how randomization can be {a} key {tool} in reconciling individual and
group fairness: {we believe that this might hold for other problems, besides ranking}.

\section{Related work}
\label{sec:related}
        There are some works on algorithmic fairness focused on individual fairness, but none of them considers them in conjunction with
        group fairness.
Dwork et al~\cite{fairness_awareness} introduce a notion of individual fairness in classification problems.
Roughly speaking, their definition requires that all pairs of similar individuals should be treated similarly.
 This is impossible to satisfy with a deterministic classifier so, similarly to ours, their
 definition of fairness requires randomized algorithms. The individual similarity metric is assumed given, while  they base their notion of 'similar treatment'  on the difference
 between the probabilities of a favourable classification.
         Kearns et al. \cite{KearnsRW17} introduce the notion of \emph{meritocratic fairness} in the context of selecting a group of individuals from incomparable populations (with no
        group-fairness constraint). Their notion intuitively requires that less qualified candidates do not have a higher chance of getting selected than more qualified ones.
Another work focusing on individual fairness is that of Biega et al. \cite{BiegaGW18}, which aims at achieving \emph{equity-of-attention fairness} amortized across many rankings, by requiring that exposure be proportional to relevance.

Our previous work~\cite{dgs2020} presents a very general framework to deal with individual
fairness, based on randomized maxmin-fairness: the idea is to use a
distribution of solutions in order to maximize the expected value for the worst-off individual. 
In particular, \cite{dgs2020}  analyzes the case of unweighted matching with no group-fairness constraint: presents efficient algorithms and shows that these maxmin-fair matching distributions minimize inequality. 
While the techniques from~\cite{dgs2020} are combinatorial and can only deal with unrestricted matchings,
we greatly generalize the algorithmic results therein via convex optimization techniques, showing that for a wide class of problems (including weighted matching and ranking with constraints), a
maxmin-fair distribution may be found in polynomial time; we only require the existence of a \emph{weighted optimization oracle} (see Section~\ref{sec:algorithms}).

The bulk of recent literature on fairness in ranking \cite{YangS17,CelisSV18,ZehlikeB0HMB17,SinghJ18,YangGS19,AsudehJS019,GeyikAK19} and learning-to-rank \cite{SinghJ19,DworkKRRY19,Narasimhan2020} deals with group fairness.
Singh and Joachims \cite{SinghJ18} propose an algorithm 
computing a fair
probabilistic ranking maximizing expected global utility. The fairness constraints expressible in their framework apply to the ranking distribution and not to each single ranking, as required by the group-fairness constraints we use. Celis et
al.~\cite{CelisSV18} also investigate fair ranking with group-fairness constraints with an objective function of the form~\eqref{eq:global}, assuming the values $W(r, u)$ satisfy the Monge condition. They give a %maximum flow-based 
polynomial-time algorithm for %the case of 
disjoint protected groups, and a faster greedy algorithm that works
 when only upper bound constraints are given.
 When the protected groups are allowed to overlap the problem becomes \NPhard\ and a polynomial-time approximation algorithm is provided
 in~\cite{CelisSV18}.

\section{Maxmin-Fair Ranking}
\label{sec:problem}

We are given a set of $n$ individuals to be ranked $\users$, a partition of $\users$ into groups $C_1, \ldots, C_t$, and a relevance function $R:\calU \to \reals$. For the sake of
simplicity we assume that ties are broken so that all $R(u)$ are distinct. Moreover, we are given group-fairness constraints as in~\cite{CelisSV18}, defined  by setting, for each
$i \in [n]$ and $k \in [t]$, a lower bound $l_i^k \in \naturals $ and an upper bound $u_i^k \in \naturals$ on the number of individuals from class $k$ in the first $i$ positions.
We denote by  $\mathcal{R}$ the set of all possible rankings
    of $\users$ (bijections from $\calU$ to $[n]$), and by $\sols \subseteq \calR$ the set of all valid rankings: \begin{equation}\label{eq:constraints}
\sols = \left\{ r \in \calR \mid l_i^k \le |\{ u \in C_k \mid r(u) \le i \}| \le u_i^k \ \; \forall i \in [n], k \in [t] \right\}.
\end{equation} Finally, we consider a value function $V:\sols \times \users \to \reals$
 such that $V(r,u)$ represents the value (utility) that placing $u$ at position $r(u)$ brings to the individual $u$, relative to $u$'s quality $R(u)$.
As we are interested in modeling meritocratic fairness, our value function must take into consideration the input relevance score $R(u)$ and the produced ranking $r(u)$.
We
consider value functions of the form:
                  \begin{equation}\label{eq:utility} V(r, u) =  f(r(u)) - g(u), \end{equation}
where $f: [n] \rightarrow \reals$ is
a decreasing function
and
 $g: \users \rightarrow \reals$ is
increasing in
$R(u)$.

The intuition is the following:
suppose that being assigned at position $i$ carries
intrinsic
                utility $f(i)$, while $u$'s merit for the ranking problem is $g(u)$ (which may depend on $u$ and hence also on $R(u)$); then $V(r, u)$  measures the net difference between $f(r(u))$ and $g(u)$, i.e., how
                much $u$ has gained in $r$ w.r.t. $u$'s actual merit.
In typical applications we can take any decreasing function $p:[n] \to \reals^{\ge 0}$ encoding \emph{position bias} or \emph{exposure} (see~\cite{position_bias} for common
 models) and set $f = p$ and $g =p \circ r^*$. As simple examples,  by setting $p(i) = n-i$ and $p(i) = \log(n/i)$, we can get $V(r, u) = r^*(u) - r(u)$ and $V(r, u) = \log ( \frac{r^*(u)}{r(u)})$.
 When the ranking is a selection process where
 $k \in \naturals$ individuals are selected and there is no advantage to being ranked first over $k_{th}$ as long as one is selected, we may use
 $$V(r,u) =\begin{cases}
                              1, & \text{if}\  r^*(u)   > k \text{ and } r(u) \le k \\
                              -1, & \text{if}\ r^*(u) \le k \text{ and } r(u) > k \\
                              0, & \text{otherwise}.
                            \end{cases}$$
These are but a few examples. Determining which value function $V$ is best from a psychological or economical standpoint is beyond the scope of this work.
Instead we  take $V$ as given and design algorithms which can efficiently deal with any
function of the form~\eqref{eq:utility}.

\subsection{Maxmin-fairness framework}\label{sec:maxmin_general}
Consider an input instance $\instance$ of a general search problem
which defines implicitly a set $\calS  = \calS(\instance)$ of feasible solutions, assumed to be finite and non-empty.
Let $\users$ denote a finite set of individuals and let us associate with each solution $S \in \sols$ and each individual $u \in \users$ a real-valued satisfaction $A(S, u) \in \reals$
(which in~\cite{dgs2020} takes binary values).
Consider
a \emph{randomized algorithm} $\calA$ that, for any given problem instance
$\instance$,  always halts and selects a solution $\calA(\instance) \in \sols$.
Then $\calA$ induces a probability distribution $D$ over $\sols$: $\Pr_D[S] = \Pr[\calA(\instance) = S]$
 $\forall S \in \sols$.
 Denote the \emph{expected satisfaction} of each $u \in \users$  under  $D$ by
$D[u] \triangleq \expect_{S \sim D}[  A(S, u) ].$
A distribution $F$ over $\sols$ is \emph{maxmin-fair} for $(\users, A)$ if it is impossible to improve the expected satisfaction of any individual without decreasing it for some other individual which is no better off, i.e.,
if for all distributions $D$ over $\sols$ and all $u \in \users$,
\begin{align}\label{eq:maxmin}
\cov{D}{u} &> \cov{F}{u} \implies \exists v \in \users \mid
    \cov{D}{v} < \cov{F}{v} \le \cov{F}{u}.
    \end{align}

Maxmin-fair distributions always exist~\cite{dgs2020}.  Due to the convexity of the set of feasible probability distributions, an
equivalent definition 
    can be given based on the sorted vectors of expected satisfactions.
Given a distribution $D$ over $\calS$, let $\scov{D} = (\lambda_1, \ldots, \lambda_n)$ be the
vector of expected satisfactions $(D[u])_{u \in \calU}$ sorted in increasing order.
Let $\succ$ denote the lexicographical order of vectors: i.e., $(v_1, \ldots, v_n) \succ (w_1, \ldots, w_n)$ iff there is some index $i\in[n]$ such that $v_i > w_i$ and $v_j =
w_j$ for all $j < i$. Write $v \succeq w$ if $v = w$ or $v \succ w$.
Then a distribution $F$ over $\sols$ is \emph{maxmin-fair} if and only if $\scov{F} \succeq \scov{D}$ for all distributions $D$ over $\calS$ \cite{dgs2020}.

\begin{problem}[Maxmin-fairness in combinatorial search]\label{prob:fair_general}
Given a fixed search problem, a set $\calU$ of individuals, and a satisfaction function $A$,
design a randomized algorithm $\mathcal A$ which always terminates and such that, for each instance  $\instance$,
 the distribution of $\calA(\instance)$ is maxmin-fair for $(\calU, A)$.
\end{problem}
Problem~\ref{prob:fair_general} is a general formulation of maxmin-fairness in search problems. Different choices for the set of feasible solutions $\sols$ and
the satisfaction function $A$ lead to different algorithmic problems. The problem involves
continuous optimization over infinitely many distributions, each defined over the set $\sols$ of valid solutions
(which is exponential-size).
Despite these difficulties,  we will
show that Problem~\ref{prob:fair_general} is tractable under mild conditions (Section~\ref{sec:algorithms}).

Garc\'ia--Soriano and Bonchi~\cite{dgs2020} instantiate Problem~\ref{prob:fair_general} with the case of matching. The main problem studied in the rest of this paper is obtained
by instantiating Problem~\ref{prob:fair_general} with the case of \emph{ranking under  group-fairness constraints with
individual-level value function}: in our setting $\sols$ is the set of rankings $r$ over $\users$ satisfying the group-fairness constraints, and $A(S,u)$ is our value function $V(r,u)$.

\begin{problem}[Maxmin-fair ranking with group-fairness constraints]\label{prob:main}
Given a set of individuals to be ranked $\users$, a partition of $\users$ into groups, a set $\sols$ of rankings satisfying a given set of group-fairness constraints as defined
in~\eqref{eq:constraints},  and a value function $V$ as defined in~\eqref{eq:utility}, design a randomized algorithm which outputs rankings in $\sols$,  such that its output distribution over $\sols$  is maxmin-fair.
\end{problem}

\subsection{Properties of maxmin-fair rankings} \label{subsec:properties}

We next state some important properties of maxmin-fair rankings. For the sake of readability, the proofs can be found in the Appendix. The first property states the maintenance of the meritocratic order within each group  of individuals (e.g., gender).

\begin{theorem}[Intra-group meritocracy]\label{thm:meritocracy}
For any two individuals $u_1,u_2 \in \users$ belonging to the same group and such that $R(u_1) \ge R(u_2)$, it holds that, if a distribution $F$ over valid rankings $\calS$ is
maxmin-fair, then
    $\expect_{r \sim F}[f(r(u_1))] \ge \expect_{r \sim F}[f(r(u_2))]$.
\end{theorem}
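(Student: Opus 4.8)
The plan is to exploit the symmetry of the feasible set $\sols$ under transpositions of two same-group individuals, together with the direct characterization of maxmin-fairness in~\eqref{eq:maxmin}.

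First I would record the key structural fact: since $u_1$ and $u_2$ lie in the same group, $\sols$ is invariant under the transposition $\sigma\colon \calR \to \calR$ that exchanges their positions, i.e.\ $\sigma(r)(u_1) = r(u_2)$, $\sigma(r)(u_2) = r(u_1)$, and $\sigma(r)(u) = r(u)$ otherwise. Indeed, the constraints in~\eqref{eq:constraints} only count, for each $i$ and $k$, how many members of $C_k$ occupy the first $i$ positions; as $u_1,u_2$ belong to the same $C_k$, swapping them leaves every such count unchanged, so $r \in \sols \iff \sigma(r) \in \sols$.

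Next I would introduce the swapped distribution $D = \sigma_\ast F$, obtained by sampling $r \sim F$ and outputting $\sigma(r)$; by the invariance above $D$ is supported on $\sols$ and hence is an admissible competitor in~\eqref{eq:maxmin}. Writing $a = \expect_{r\sim F}[f(r(u_1))]$ and $b = \expect_{r\sim F}[f(r(u_2))]$, a direct computation gives $\cov{D}{u_1} = b - g(u_1)$ and $\cov{D}{u_2} = a - g(u_2)$, while $\cov{D}{u} = \cov{F}{u}$ for every other $u$; by contrast $\cov{F}{u_1} = a - g(u_1)$ and $\cov{F}{u_2} = b - g(u_2)$. I would then assume, for contradiction, that $a < b$. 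This makes $u_1$ strictly better off under $D$, and the only individual made worse off is $u_2$. Applying~\eqref{eq:maxmin} to $u_1$, there must exist $v$ with $\cov{D}{v} < \cov{F}{v} \le \cov{F}{u_1}$, forcing $v = u_2$. But since $R(u_1) \ge R(u_2)$ and $g$ is increasing in $R$, we have $g(u_1) \ge g(u_2)$, whence
\[
\cov{F}{u_2} = b - g(u_2) > a - g(u_1) = \cov{F}{u_1},
\]
contradicting $\cov{F}{u_2} \le \cov{F}{u_1}$. Thus $a \ge b$, which is the claim.

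The main obstacle is really just the symmetry observation — recognizing that group-fairness constraints are blind to within-group permutations, so that $\sigma$ preserves feasibility and $D$ is a legitimate distribution over $\sols$. Once that is in place, the rest is a one-line application of the defining inequality~\eqref{eq:maxmin}, with the monotonicity of $g$ supplying the strict separation $\cov{F}{u_2} > \cov{F}{u_1}$ that closes the argument.
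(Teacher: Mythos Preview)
Your proof is correct and rests on the same key observation as the paper's: the feasible set $\sols$ is closed under swapping the positions of two same-group individuals, so one can build a competing distribution and invoke~\eqref{eq:maxmin}. The execution differs, however. The paper swaps $u_1$ and $u_2$ only in a \emph{single} ranking $r$ in the support of $F$ (one where $f(r(u_2)) > f(r(u_1))$), which yields an intermediate implication---roughly, that such an $r$ forces $F[u_1] \ge F[u_2]$---and then concludes via a two-case analysis on the sign of $F[u_1] - F[u_2]$. You instead push the entire distribution forward along the transposition, $D = \sigma_\ast F$, and obtain the contradiction in one stroke: under $a < b$ the only individual harmed by $D$ is $u_2$, but $F[u_2] = b - g(u_2) > a - g(u_1) = F[u_1]$ rules $u_2$ out as the witness required by~\eqref{eq:maxmin}. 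Your route is shorter and avoids the case split; the paper's single-ranking swap is slightly more localized but buys nothing extra here.
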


Our  second property employs the notion of \emph{(generalized) Lorenz dominance} from~\cite{shorrocks}, a
property indicating a superior  distribution of
net incomes.
Consider two ranking distributions $A$ and $B$. Let $A_{(k)} = \scov{A}[k]$ denote the $k_{th}$ element of the expected satisfaction values sorted in increasing order.
Then
\emph{$A$ dominates $B$} if
         $ \sum_{i=1}^k A_{(i)} \ge \sum_{i=1}^k B_{(i)} \; \forall k \in [n], $
i.e., the expected cumulative satisfaction of the bottom individuals is always higher or equal in~$A$.

A distribution is \emph{generalized Lorenz-dominant} if it dominates every other distribution.
When it exists, such a distribution has a strong claim to being superior to all others, in terms of equity and efficiency~\cite{shorrocks,thistle1989ranking,lambert1992distribution}. A generalized Lorenz-dominant
distribution must also be maxmin-fair. 
 We show that
 a dominant distribution
does exist for rankings, %\footnote{A similar property holds for unconstrained, dichotomous (unweighted) matching~\cite{dichotomous}.}
    in the important case where only upper bound constraints are given in~\eqref{eq:constraints}.
Notice that in the case of two groups (e.g., a binary protected attribute), lower bound constraints may be replaced with an equivalent set of upper bound constraints.

\begin{theorem}\label{thm:lorenz}
The maxmin-fair ranking with upper bounds is generalized Lorenz-dominant.
\end{theorem}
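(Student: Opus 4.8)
The plan is to pass from distributions to their expected-satisfaction profiles and to recognize the resulting polytope as the base polytope of a submodular function, for which a dominant point is known to exist. Let $P = \{(\cov{D}{u})_{u \in \users} : D \text{ a distribution over } \sols\} = \mathrm{conv}\{(V(r,u))_{u \in \users} : r \in \sols\} \subseteq \reals^{\users}$ be the polytope of achievable profiles. Because $V(r,u) = f(r(u)) - g(u)$ and $r$ is a bijection, every point of $P$ has the same coordinate sum $\sum_{j \in [n]} f(j) - \sum_{u \in \users} g(u)$, so $P$ lies in a fixed hyperplane. By the sorted-vector characterization of maxmin-fairness, and since generalized Lorenz dominance asks that the increasingly-sorted partial sums be simultaneously maximal, it suffices to produce a single point $p^\star \in P$ whose sorted vector $\scov{p^\star}$ has partial sums $\sum_{i \le k}$ at least as large as those of every other point of $P$, for all $k \in [n]$. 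The distribution realizing $p^\star$ is then Lorenz-dominant, hence maxmin-fair; and since all maxmin-fair distributions share the same (lexicographically maximal, hence unique) sorted profile, every maxmin-fair distribution is Lorenz-dominant, as claimed.

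Next I would identify $P$ as a base polytope. Viewing a ranking as a perfect matching between $\users$ and the positions $[n]$, a distribution $D$ induces marginals $x_{u,j} = \Pr_D[r(u) = j]$ forming a doubly stochastic matrix subject to the expected prefix caps $\sum_{u \in C_k}\sum_{j \le i} x_{u,j} \le u_i^k$. The first key step is a Birkhoff--von Neumann-type decomposition: because the caps act on nested prefixes of positions within each class of a partition, the defining constraint matrix is totally unimodular, so every feasible fractional $x$ is a convex combination of integral valid rankings. Consequently $P$ is exactly the image of this fractional polytope under $x \mapsto \big(\sum_j f(j)\,x_{u,j} - g(u)\big)_u$. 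I would then show that $\rho(T) = \max_{r \in \sols}\sum_{u \in T} f(r(u))$, the best total position-value that a set $T \subseteq \users$ can collectively attain in a valid ranking, is submodular: with only upper bounds, the positions available to any set form a matroid (a nested/scheduling matroid read off from the caps), and the value of the optimal capped assignment is submodular in $T$. Since $g$ is modular, $\tilde\rho(T) = \rho(T) - \sum_{u \in T} g(u)$ is submodular as well, and $P$ equals its base polytope $B(\tilde\rho)$.

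Finally, I would invoke Fujishige's theory of the lexicographically optimal (egalitarian) base: the base polytope $B(\tilde\rho)$ contains a unique minimum-Euclidean-norm point $p^\star$, and this point is generalized Lorenz-dominant, its increasingly-sorted partial sums dominating those of every other base. Taking any distribution $F$ over $\sols$ realizing $p^\star$ (which exists by the decomposition of the previous step, with polynomial-size support by Carath\'eodory) completes the argument.

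The main obstacle is the submodularity of $\rho$, and this is precisely where the ``upper bounds only'' hypothesis is indispensable. With upper bounds alone, whether a set $T$ can occupy the best positions is governed by a single matroid, so the greedy capped assignment is well defined and its value is submodular; once lower-bound constraints are added, feasibility becomes an intersection of two matroid-type conditions, the subset-value function loses submodularity, the egalitarian base need not exist, and generalized Lorenz dominance can genuinely fail. Establishing the matroid/polymatroid structure cleanly---together with the total unimodularity underlying the decomposition---is therefore the crux of the proof.
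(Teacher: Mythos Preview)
Your approach is correct and runs parallel to the paper's, but is packaged differently. Both proofs hinge on the same key lemma: the submodularity of $\rho(T) = \max_{r \in \sols} \sum_{u \in T} f(r(u))$ (the paper's Lemma~\ref{lem:submodular}, proved via the greedy weighted oracle; your matroid sketch points in the same direction but is less explicit). Where you diverge is in how this submodularity is exploited. You identify the profile polytope $P$ with the base polytope $B(\tilde\rho)$ and then invoke Fujishige's theorem on the lexicographically optimal base (equivalently, the Dutta--Ray egalitarian allocation), which directly yields Lorenz dominance. The paper instead reconstructs this result from first principles: Lemma~\ref{lem:characterization} gives a minimax characterization showing that the achievable satisfaction vectors are cut out exactly by the inequalities $\sum_{u \in X}\lambda_u \le H(X)$, and Lemma~\ref{lem:blocks} then builds a ``fair decomposition'' into blocks $B_1, \ldots, B_k$ by iteratively peeling off the set minimizing the ratio $(H(X\cup S_{i-1})-H(S_{i-1}))/|X|$ and proving Lorenz dominance against each block. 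This is precisely the Fujishige/Dutta--Ray principal sequence, derived from scratch. Your route is shorter and more conceptual if one is willing to import polymatroid machinery; the paper's route is self-contained and makes the block structure explicit. One remark: your Birkhoff--von Neumann/TU step is a detour you do not actually need. The inclusion $P \subseteq B(\tilde\rho)$ is immediate, and the reverse follows because every vertex of $B(\tilde\rho)$ (given by Edmonds' greedy for some ordering of $\users$) coincides with the profile of the greedy ranking of Algorithm~\ref{alg:weighted_upper} under that same ordering; the TU claim for the marginal polytope, while plausible, is both harder to verify and unnecessary.
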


Since $\sum_{u \in \calU} V(r, u)$ is a constant independent of $r$, an easy consequence of Theorem~\ref{thm:lorenz} is that the maxmin-fair distribution also \emph{minimizes social inequality} in the sense
of~\cite{dgs2020}: i.e., the
maximum difference between the expected satisfactions of two users.
\begin{corollary}
The maxmin-fair ranking with upper bounds minimizes
$\max_{u\in \calU} D[u] - \min_{v \in \calU}D[v]$
over all ranking distributions $D$, as well as any other quantile range.
\end{corollary}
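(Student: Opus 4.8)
The plan is to combine the generalized Lorenz-dominance of the maxmin-fair distribution (Theorem~\ref{thm:lorenz}) with the invariance of the total value: once the total is pinned, controlling the bottom partial sums automatically controls the top ones, and both extremes of the sorted satisfaction vector become simultaneously optimal under $F$.

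First I would record the constant-sum fact. Since any $r \in \sols$ is a bijection onto $[n]$, we have $\sum_{u \in \users} V(r,u) = \sum_{u} f(r(u)) - \sum_{u} g(u) = \sum_{i=1}^{n} f(i) - \sum_{u} g(u)$, which does not depend on $r$. Taking expectations over any distribution $D$ over $\sols$ shows that $\sum_{u \in \users} D[u]$ equals the same constant $C$ for every $D$; equivalently $\sum_{i=1}^{n} \scov{D}[i] = C$ for all $D$.

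Next, let $F$ be maxmin-fair with upper bounds, so by Theorem~\ref{thm:lorenz} it is generalized Lorenz-dominant: $\sum_{i=1}^{k} \scov{F}[i] \ge \sum_{i=1}^{k} \scov{D}[i]$ for every $k \in [n]$ and every $D$. Instantiating at $k=1$ gives $\min_{u} F[u] = \scov{F}[1] \ge \scov{D}[1] = \min_{u} D[u]$, so $F$ maximizes the minimum expected value. Instantiating at $k = n-1$ and subtracting from the invariant total gives $\max_{u} F[u] = \scov{F}[n] = C - \sum_{i=1}^{n-1} \scov{F}[i] \le C - \sum_{i=1}^{n-1} \scov{D}[i] = \max_{u} D[u]$, so $F$ simultaneously minimizes the maximum. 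Combining the two inequalities yields $\max_{u} F[u] - \min_{u} F[u] \le \max_{u} D[u] - \min_{u} D[u]$ for every $D$, which is the claimed minimization of social inequality.

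Finally, for the quantile-range statement I would replay the same two-sided argument at an arbitrary cut $k$ rather than $k=1$: the bottom-$k$ aggregate $\sum_{i=1}^{k} \scov{D}[i]$ is maximized by $F$ directly, while the complementary top-$k$ aggregate $\sum_{i=n-k+1}^{n} \scov{D}[i] = C - \sum_{i=1}^{n-k} \scov{D}[i]$ is minimized by $F$; hence the gap between the top and bottom quantile groups is smallest under $F$. I expect no serious obstacle, since this is genuinely a corollary of Theorem~\ref{thm:lorenz}. The one point that needs care is the interpretation of ``quantile range'': Lorenz-dominance controls only partial (tail) sums, so the statement must be read as a difference between cumulative tail aggregates, not between individual interior order statistics $\scov{D}[k]$, which Lorenz-dominance neither pushes up nor down. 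The clean monotonicity therefore hinges on pairing each tail sum with its complementary sum through the constant $C$, which is exactly what the constant-sum observation supplies.
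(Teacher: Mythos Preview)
Your proof is correct and follows exactly the approach the paper indicates: the corollary is stated as an ``easy consequence of Theorem~\ref{thm:lorenz}'' together with the observation that $\sum_{u \in \calU} V(r,u)$ is constant, and your argument simply makes these two ingredients explicit. Your caveat about the interpretation of ``quantile range'' (tail aggregates rather than individual interior order statistics) is a valid point that the paper leaves implicit.
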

Moreover, by the majorization inequality~\cite{karamata}, it also maximizes any social welfare function that is additively separable, concave and symmetric w.r.t. $\users$:
\begin{corollary}
Suppose $f:\reals \to \reals$ is concave.
When only upper bound constraints are present, the maxmin-fair distribution maximizes
$ \sum_{u \in \calU} f( D[u] ) $
over all ranking distributions $D$.
\end{corollary}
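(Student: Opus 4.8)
The plan is to deduce the corollary from Theorem~\ref{thm:lorenz} together with Karamata's majorization inequality~\cite{karamata}, the bridge between the two being the fact that the total expected satisfaction is the same for every distribution. First I would establish this constancy: since $V(r,u) = f(r(u)) - g(u)$ and each ranking $r$ is a bijection from $\users$ onto $[n]$, we have $\sum_{u \in \users} V(r,u) = \sum_{i=1}^{n} f(i) - \sum_{u \in \users} g(u)$, which is independent of $r$. Taking expectations, $\sum_{u \in \users} D[u] = \expect_{r \sim D}[\sum_{u \in \users} V(r,u)]$ equals the same constant $T$ for every distribution $D$ over $\sols$.

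Next I would invoke Theorem~\ref{thm:lorenz}: under upper-bound constraints the maxmin-fair distribution $F$ is generalized Lorenz-dominant, so $\sum_{i=1}^{k} F_{(i)} \ge \sum_{i=1}^{k} D_{(i)}$ for every $k \in [n]$ and every $D$, where $D_{(i)}$ denotes the $i$-th smallest coordinate of $\scov{D}$. The key step is to convert this Lorenz dominance into majorization using the equal-totals identity. Writing cumulative sums from the largest end, $\sum_{i=1}^{k} D_{(i)} = T - \sum_{j=1}^{n-k} D_{[j]}$, where $D_{[j]}$ is the $j$-th largest coordinate, the dominance inequality becomes $\sum_{j=1}^{m} F_{[j]} \le \sum_{j=1}^{m} D_{[j]}$ for all $m = n-k$. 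Together with $\sum_{j} F_{[j]} = \sum_{j} D_{[j]} = T$, this is exactly the statement that $\scov{D}$ \emph{majorizes} $\scov{F}$, for every distribution $D$.

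Finally I would apply Karamata's inequality. Writing $\phi$ for the concave function denoted $f$ in the statement (distinct from the decreasing $f$ of~\eqref{eq:utility}), the map $x \mapsto \sum_i \phi(x_i)$ is Schur-concave, so majorization of $\scov{F}$ by $\scov{D}$ gives $\sum_{u \in \users} \phi(D[u]) \le \sum_{u \in \users} \phi(F[u])$, the inequality being the reverse of the convex case precisely because $\phi$ is concave. Since this holds for every $D$, the maxmin-fair distribution $F$ maximizes $\sum_{u \in \users} \phi(D[u])$, as claimed.

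The main obstacle is the bookkeeping in the middle paragraph: one must track the orientation of the sort carefully, since the Lorenz definition used here sorts in increasing order whereas the standard statement of majorization sorts in decreasing order, and the equal-totals identity must be applied in the right direction to turn ``$F$ Lorenz-dominates $D$'' into ``$D$ majorizes $F$''. Once the majorization direction is fixed, concavity of $\phi$ flips Karamata's inequality into the desired maximization, and nothing else is required.
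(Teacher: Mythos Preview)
Your proposal is correct and follows exactly the route the paper indicates: the paper states the corollary as a direct consequence of Theorem~\ref{thm:lorenz} together with the majorization (Karamata) inequality~\cite{karamata}, relying on the observation (made just before the previous corollary) that $\sum_{u \in \calU} V(r,u)$ is independent of $r$. You have simply spelled out the details---constant total, converting Lorenz dominance with equal sums into majorization, and applying Karamata for concave $\phi$---that the paper leaves implicit, including the helpful disambiguation between the concave $f$ of the statement and the position-bias $f$ of~\eqref{eq:utility}.
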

In particular, in this case the maxmin-fair distribution minimizes the variance of $\scov{D}$, and when the values $V(r, u)$ are positive, it also maximizes, for instance, the Nash social
welfare~\cite{nash1950bargaining} (geometric mean)
 of
expected satisfactions. It must also minimize the Gini inequality index when it is well-defined~\cite{shorrocks}.

\section{Algorithm}
\label{sec:algorithms}
We show that our fair ranking problem (Problem~\ref{prob:main}) is efficiently solvable.
Notice that the set $\sols$ of valid solutions can be exponential-size, so enumerating  $\sols$  is out of the question in an efficient algorithm. Instead, we need a method to quickly single out the best solutions to combine for a maxmin-fair distribution.
To show how this can be done, we abstract away from the specifics of the problem and show how to
find  a maxmin-fair distributions of general search problems (Problem~\ref{prob:fair_general}).
The following notion  is key:
\begin{definition}
A \emph{weighted optimization oracle} for $A: \sols \times \users \to \reals$ is an algorithm that, given $w: \calU \to \reals^{\ge 0}$,
returns $S^*(w)$ and $A(S^*(w), u)$ for all $u \in \calU$, where
\begin{equation}\label{eq:oracle}
S^*(w) \in \argmax_{S \in \sols} \sum_{u \in \calU} w(u) \cdot A(S, u).
\end{equation}
\end{definition}
Roughly speaking, the intuition why these oracles are important is the following.  Suppose we have constructed a distribution $D$ which is not maxmin-fair.
By putting more weight on the individuals less satisfied by $D$, we can use
the weighted optimization
oracle to find a new solution $S$ placing more emphasis on them, which can be added
to ``push $D$ towards maxmin-fairness''.

Designing an efficient weighted optimization oracle is a problem-dependent task. Our first algorithmic result  reveals that their existence
suffices to solve Problem~\ref{prob:fair_general} efficiently.

\begin{theorem}\label{thm:general_algo}
Given
a weighted optimization oracle, Problem~\ref{prob:fair_general} is solvable in polynomial time.
\end{theorem}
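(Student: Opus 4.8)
The plan is to recast the problem as computing the leximin-optimal point of a polytope, using the oracle as a linear-optimization engine. To each solution $S \in \sols$ associate the vector $a_S = (A(S,u))_{u \in \users} \in \reals^{n}$, and let $P = \operatorname{conv}\{a_S : S \in \sols\}$. A distribution $D$ over $\sols$ realizes exactly the point $(\cov{D}{u})_{u\in\users} = \sum_{S}\Pr_D[S]\,a_S \in P$, and conversely every point of $P$ arises from some distribution; hence, by the characterization $\scov{F}\succeq\scov{D}$ recalled above, a maxmin-fair distribution is precisely one whose expected-satisfaction vector is the leximin-optimal point $x^*$ of $P$. Crucially, since $\sum_{u}w(u)A(S,u) = \langle w, a_S\rangle$, maximizing over $S \in \sols$ is identical to maximizing $\langle w, x\rangle$ over $x \in P$; thus the weighted optimization oracle is exactly a linear-optimization oracle over $P$, albeit restricted to nonnegative objective vectors $w \geq 0$.

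The computational core is to solve, given a set $F \subseteq \users$ of still-free individuals and lower bounds $c_u$ on the already-frozen ones, the program $\max\{\,t : x \in P,\ x_u \geq t\ \forall u \in F,\ x_u \geq c_u\ \forall u \notin F\,\}$. By LP duality this min--max equals $\min\bigl[\,\sum_{u}\tilde w(u)\,A(S^*(\tilde w),u) - \sum_{u\notin F}\mu_u c_u\,\bigr]$, minimized over vectors that place a probability distribution on the coordinates of $F$ and arbitrary nonnegative multipliers $\mu$ on the frozen coordinates, where $\tilde w$ is the combined weight vector. Since $\tilde w \geq 0$, each inner maximization $\max_{x\in P}\langle \tilde w, x\rangle$ is a single oracle call, which simultaneously returns a subgradient, namely $a_{S^*(\tilde w)}$. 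I would therefore minimize a convex, piecewise-linear function over a simple polyhedral domain using only this subgradient oracle, solvable in polynomial time by the ellipsoid method (equivalently, by cutting planes / column generation driven by the oracle), provided the oracle runs in polynomial time and the values $A(S,u)$ have polynomially bounded encodings. Each such solve terminates after polynomially many oracle calls and yields a polynomial-size set of solutions, the optimal value $\lambda$, and optimal dual weights.

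With this subroutine in hand, I would run the standard leximin loop: initialize $F = \users$ with no frozen coordinates, and repeatedly compute the current level $\lambda$, identify the \emph{saturated} individuals --- those $u \in F$ for which $\max\{x_u : x\in P,\ x_v\geq\lambda\ \forall v\in F,\ x_v\geq c_v\ \forall v\notin F\} = \lambda$, detectable from the support of the optimal dual weights or by one improvement LP per candidate --- freeze them at value $\lambda$, and remove them from $F$. Since each round freezes at least one individual, there are at most $n$ rounds, so the entire loop makes polynomially many oracle calls. Finally, $x^*$ lies in the convex hull of all solutions generated across the rounds; one feasibility LP expresses $x^*$ as a convex combination of them, and Carath\'eodory's theorem trims the support to at most $n+1$ solutions, giving the desired polynomially supported maxmin-fair distribution to sample from.

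I expect the main obstacle to be respecting the restriction that the oracle only accepts \emph{nonnegative} weights throughout. The argument hinges on the fact that, after dualizing both the threshold constraints (whose multipliers form a distribution over $F$) and the freezing constraints (whose multipliers $\mu$ are nonnegative), the resulting inner objective $\tilde w$ is always nonnegative, so no oracle call ever needs a negative weight. Verifying this invariant, together with the correctness of the saturated-set identification --- i.e.\ that freezing the detected individuals at $\lambda$ neither overshoots nor discards any leximin-optimal point --- is the delicate part; the convex-optimization machinery and the final sparsification are then routine.
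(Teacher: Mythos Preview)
Your proposal is correct and matches the paper's approach: both run a leximin loop that, at each round, solves an LP of the form $\max\{\lambda : D[v]\ge \alpha_v\ \forall v\in K,\ D[v]\ge\lambda\ \forall v\notin K\}$ via the ellipsoid method with the weighted optimization oracle as separation oracle, identify the individuals to freeze from the support of the optimal dual (complementary slackness), and then recover a sparse primal distribution supported on the oracle-generated solutions. The only cosmetic differences are that the paper writes out the primal/dual pair explicitly rather than via the polytope~$P$, handles the degenerate case where the optimal dual support lies entirely in the already-frozen set by re-solving with a modified objective (your ``one improvement LP per candidate'' is an alternative fix), and extracts the final distribution from the reduced primal of the last round rather than via a separate feasibility LP plus Carath\'eodory trimming.
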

We emphasize that Theorem~\ref{thm:general_algo} is very general and its  applicability
is in no way limited
to ranking problems, or to value
functions of a certain form. They apply to an arbitrary search problem $P$ (e.g., searching for a ranking, a matching, a clustering...) and an arbitrary set of
individuals. As long as
an efficient weighted optimization algorithm exists for $P$,
    it yields efficient algorithms for maxmin-fair-$P$ (Problem~\ref{prob:fair_general}).
The wide applicability of this condition implies that maxmin-fair distributions may be efficiently solved in a great many cases of interest: most polynomial-time solvable problems studied in combinatorial optimization (e.g., shortest paths, matchings, polymatroid intersection\dots) admit a polynomial-time weighted optimization oracle.
Thus, Theorem~\ref{thm:general_algo} extends in a new direction the results
of~\cite{dgs2020}:
as efficient weighted matching algorithms exist, the main result of~\cite{dgs2020} becomes a corollary to Theorem~\ref{thm:general_algo} (up to the loss of a polynomial factor in runtime).

More importantly for us, the same holds for constrained ranking.
\begin{theorem}\label{thm:monge} Ranking with group-fairness constraints as in~\eqref{eq:constraints} and a value function of the form~\eqref{eq:utility} admits a
polynomial-time weighted optimization oracle.  \end{theorem}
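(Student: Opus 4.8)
The plan is to build the oracle directly from the product structure of the objective. Given a weight vector $w:\users\to\reals^{\ge 0}$, the oracle must output a valid ranking maximizing $\sum_{u\in\users} w(u)\,V(r,u)$. Expanding the value function~\eqref{eq:utility} gives
\[ \sum_{u\in\users} w(u)\,V(r,u) \;=\; \sum_{u\in\users} w(u)\,f(r(u)) \;-\; \sum_{u\in\users} w(u)\,g(u), \]
and the second sum is a constant independent of $r$. Hence it suffices to maximize $\sum_{u\in\users} w(u)\,f(r(u))$ over $r\in\sols$. I would read this as a constrained assignment problem: placing individual $u$ at position $p=r(u)$ yields value $W(u,p):=w(u)\,f(p)$, and we seek the bijection $r$ of maximum total value subject to the group-fairness constraints~\eqref{eq:constraints}. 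This is precisely the optimization studied by Celis et al.~\cite{CelisSV18}, with value matrix $W$.

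The crux is to certify that $W$ satisfies the Monge condition their algorithm requires. Order the individuals by non-increasing weight $w$ and the positions by their natural order $1<\cdots<n$; since $f$ is decreasing, $f$ is non-increasing along the latter. Then for any individuals $u,u'$ with $w(u)\ge w(u')$ and any positions $p<p'$,
\[ W(u,p)+W(u',p') - W(u,p') - W(u',p) \;=\; \bigl(w(u)-w(u')\bigr)\bigl(f(p)-f(p')\bigr)\;\ge\;0, \]
as both factors are non-negative (using $w\ge 0$ from the oracle interface and the monotonicity of $f$). Thus the rank-one matrix $W=w\cdot f$ is Monge, with the inequality oriented so that high-weight individuals are favoured at high-value positions --- exactly the regime in which the maximizing assignment can be computed efficiently. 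Note that no positivity of $f$ is needed; only the two monotonicities enter.

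With the Monge property in hand, I would invoke the polynomial-time algorithm of Celis et al.~\cite{CelisSV18} for ranking under group-fairness constraints with a Monge value matrix. Their result applies to \emph{disjoint} protected groups, and our $C_1,\dots,C_t$ form a partition of $\users$, so the hypothesis is met; their exact (non-greedy) variant moreover accommodates both the lower bounds $l_i^k$ and the upper bounds $u_i^k$ of~\eqref{eq:constraints}, realizing the optimum via an assignment / min-cost-flow computation in time polynomial in $n$ and $t$. The oracle then returns $r=S^*(w)$ together with each $V(r,u)=f(r(u))-g(u)$, as required by~\eqref{eq:oracle}. I expect the only genuine obstacle to be bookkeeping rather than mathematics: aligning my ordering conventions and the \emph{direction} of the Monge inequality with those assumed by~\cite{CelisSV18} (maximization versus minimization), and confirming that their disjoint-group algorithm indeed handles two-sided prefix constraints. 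The reduction itself --- discarding the $r$-independent term and factoring $W$ as a product --- is routine, and the rank-one structure makes the Monge condition immediate.
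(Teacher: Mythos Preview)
Your proposal is correct and follows essentially the same route as the paper: sort individuals by non-increasing weight, verify the Monge property of the value matrix via the factorization $(w(u)-w(u'))(f(p)-f(p'))\ge 0$, and invoke the polynomial-time algorithm of Celis et al.~\cite{CelisSV18} for disjoint groups. The only cosmetic difference is that you first discard the $r$-independent term $\sum_u w(u)g(u)$ and work with the rank-one matrix $W(u,p)=w(u)f(p)$, whereas the paper keeps $W_{iu}=w(u)(f(i)-g(u))$; both yield the identical Monge computation, and the paper additionally remarks (in a footnote) that the extra monotonicity hypothesis stated in~\cite{CelisSV18} is not actually needed here.
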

\begin{corollary}
Maxmin-fair ranking with group-fairness constraints (Problem~\ref{prob:main}) is solvable in polynomial-time.
\end{corollary}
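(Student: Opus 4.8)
The corollary follows by composing the two results established immediately above: Theorem~\ref{thm:general_algo} reduces Problem~\ref{prob:main} (an instance of Problem~\ref{prob:fair_general}) to the construction of a single polynomial-time weighted optimization oracle, and Theorem~\ref{thm:monge} supplies exactly such an oracle for constrained ranking. The entire weight of the argument therefore falls on Theorem~\ref{thm:monge}, and that is what I would concentrate on; the final deduction is a one-line invocation of the two theorems.

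To build the oracle I would first simplify the objective. Given weights $w:\users \to \reals^{\ge 0}$, the oracle must return
$$S^*(w) \in \argmax_{r \in \sols} \sum_{u \in \users} w(u)\, V(r,u) = \argmax_{r \in \sols} \sum_{u \in \users} w(u)\bigl(f(r(u)) - g(u)\bigr).$$
Since $g(u)$ depends on $u$ alone, the term $\sum_{u} w(u) g(u)$ is a constant offset independent of $r$ and can be dropped, so the oracle reduces to maximizing $\sum_{u} w(u) f(r(u))$ over the valid rankings $\sols$ of~\eqref{eq:constraints}. I would then observe that this is itself an instance of the global-utility objective~\eqref{eq:global} of Celis et al.~\cite{CelisSV18}: setting $W'(r,u) \triangleq w(u) f(r(u))$, a quantity depending only on $u$ and on its assigned position $r(u)$, the oracle asks precisely for $\argmax_{r \in \sols} \sum_u W'(r,u)$.

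The remaining task is to check that $W'$ meets the hypotheses of their polynomial-time algorithm, namely (i) disjoint protected groups and (ii) the Monge condition. Hypothesis (i) holds because $\users$ is partitioned into $C_1,\dots,C_t$ by assumption. For (ii), I would order individuals by decreasing weight $w$ and positions in their natural order (recalling that $f$ is decreasing) and verify the rearrangement inequality: for $w(u) \ge w(u')$ and positions $j \le j'$ (so $f(j) \ge f(j')$),
$$W'_{u,j} + W'_{u',j'} - W'_{u,j'} - W'_{u',j} = \bigl(w(u) - w(u')\bigr)\bigl(f(j) - f(j')\bigr) \ge 0,$$
which is exactly the (super)modular Monge-type condition their algorithm exploits. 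Feeding $W'$ into the Celis et al.\ algorithm for disjoint groups then returns the optimal valid ranking $S^*(w)$ in polynomial time, and computing $V(S^*(w),u) = f(S^*(w)(u)) - g(u)$ for every $u$ is immediate, completing the oracle and hence, via Theorem~\ref{thm:general_algo}, the corollary.

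The main obstacle I anticipate is a matter of conventions rather than new mathematics: one must ensure that the orientation of the Monge condition assumed in~\cite{CelisSV18} (minimization versus maximization, the direction of the inequality, and the fixed sort orders of rows and columns) agrees with the product form $w(u) f(j)$ after the constant $g$-term is dropped, and that their feasibility handling covers both the lower and upper bounds $l_i^k, u_i^k$ of~\eqref{eq:constraints}. Should a direct black-box invocation prove awkward, a self-contained fallback is to encode the constrained assignment as a min-cost flow / transportation problem in which the prefix-count bounds become capacity constraints; the same product/Monge structure then guarantees an integral optimum, again yielding $S^*(w)$ in polynomial time.
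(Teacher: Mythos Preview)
Your proposal is correct and follows essentially the same route as the paper: invoke Theorem~\ref{thm:general_algo}, and for Theorem~\ref{thm:monge} verify the Monge property of the weighted value matrix and then call the Celis et al.\ algorithm for disjoint groups. The only cosmetic difference is that you drop the $g$-term as a constant offset before forming $W'_{u,j}=w(u)f(j)$, whereas the paper keeps $W_{iu}=w(u)(f(i)-g(u))$ and lets the $g$-contributions cancel in the Monge computation; both yield the identical inequality $(w(u)-w(u'))(f(j)-f(j'))\ge 0$, and the anticipated ``orientation/monotonicity'' subtlety you flag is exactly the one the paper addresses in a footnote when invoking~\cite{CelisSV18}.
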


In Section~\ref{sec:proof_exact} we prove Theorem~\ref{thm:general_algo} by solving a sequence suitably designed linear programs. Each of these programs requires exponentially many constraints to be written down explicitly, but can nonetheless be solved efficiently via
the ellipsoid method using a weighted optimization oracle. (As explained in Appendix~\ref{sec:apx_fairness}, if we settle for some approximation error, these LPs can also be solved approximately using techniques to solve zero-sum games and packing/covering
LPs~\cite{young1995randomized,freund1999adaptive,packing_covering}.)
Finally, in Section~\ref{sec:monge} we show the existence of weighted optimization oracles for ranking (Theorem ~\ref{thm:monge}).

\subsection{Proof of Theorem~\ref{thm:general_algo}}\label{sec:proof_exact}
       We start by showing a weaker result concerning the computation of the optimal expected satisfaction values, rather than the actual distribution of solutions.
\begin{lemma}\label{thm:fairness_P}
    Given a weighted optimization oracle,
    the expected satisfactions of a maxmin-fair
distribution can be computed in polynomial time.
\end{lemma}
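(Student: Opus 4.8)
The plan is to recast the computation as a \emph{leximin} optimization over the polytope of achievable expected-satisfaction vectors and to solve it through a sequence of linear programs, each handled by the ellipsoid method with the weighted optimization oracle playing the role of a separation oracle. Concretely, let $P = \mathrm{conv}\{ (A(S,u))_{u \in \users} : S \in \sols \} \subseteq \reals^{\users}$ be the set of profiles $(\cov{D}{u})_{u}$ realizable by some distribution $D$ over $\sols$; this is exactly the set of expected-satisfaction profiles we may achieve. By the lexicographic characterization of maxmin-fairness stated above, it suffices to compute the sorted profile $\scov{F}$ that is $\succeq$ every $\scov{D}$, i.e. the leximin-optimal point of $P$. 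The key observation is that, for any weight vector $w \colon \users \to \reals^{\ge 0}$, maximizing $x \mapsto \sum_u w(u)\, x_u$ over $P$ is attained at a vertex $(A(S^*(w),u))_u$, so the oracle is precisely a linear-optimization (hence separation) oracle for $P$ along \emph{nonnegative} directions --- which, as we will see, are the only directions the leximin procedure ever needs.

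First I would compute the top level $\lambda_1 = \max_{D}\min_{u}\cov{D}{u}$, the best achievable worst-case satisfaction. Writing this as the linear program $\max\{\lambda : \sum_S p_S A(S,u) \ge \lambda\ \forall u,\ \sum_S p_S = 1,\ p \ge 0\}$ with (exponentially many) variables $p_S$, I pass to its dual
\[
\lambda_1 \;=\; \min_{w \in \Delta(\users)} \ \max_{S \in \sols} \ \sum_{u} w(u)\, A(S,u),
\]
an LP over $(t,w)$ with one constraint $t \ge \sum_u w(u) A(S,u)$ per solution $S$ and $w$ ranging over the simplex. Given a candidate $(t,w)$ with $w \ge 0$, the oracle returns $S^*(w)$, which either certifies feasibility or exhibits the most violated constraint; feeding this separation routine to the ellipsoid method yields $\lambda_1$ in polynomial time.

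Next I would peel off the saturated individuals and recurse. Having fixed values $\lambda_1 < \dots < \lambda_{j-1}$ on disjoint ``frozen'' blocks $N_1,\dots,N_{j-1}$, I solve $\lambda_j = \max_{D}\,\min_{u \notin \bigcup_{i<j} N_i}\cov{D}{u}$ subject to $\cov{D}{u} \ge \lambda_i$ for every frozen $u \in N_i$; this is again a max-min LP whose dual is separated by the oracle using only nonnegative weights (the frozen lower bounds contribute nonnegative dual multipliers, and the objective puts weight only on unfrozen individuals). To decide which individuals are genuinely stuck at $\lambda_j$, for each unfrozen $u$ I solve the auxiliary LP $\max\{\cov{D}{u} : \cov{D}{v} \ge \lambda_j\ \forall v \notin \bigcup_{i<j}N_i,\ \cov{D}{v}\ge\lambda_i\ \forall v \in N_i\}$ --- whose objective direction $e_u$ and whose $\ge$ constraints again yield nonnegative separating weights --- and place $u$ in $N_j$ iff this maximum equals $\lambda_j$. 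Since at least one unfrozen individual freezes each round, the process terminates after at most $n$ rounds, and assigning $\cov{F}{u} = \lambda_i$ for $u \in N_i$ produces the desired expected satisfactions; correctness follows from the lexicographic definition of maxmin-fairness.

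I expect the main obstacle to be the interface between the combinatorial oracle and exact LP solving, rather than the high-level leximin scheme. Two points need care. First, the oracle only accepts nonnegative weights, so the whole reduction must be arranged so that every separating hyperplane and every objective has nonnegative coefficients; this is why the max-min/leximin formulation, as opposed to an arbitrary linear objective, is the right vehicle. Second, the ellipsoid method must be run so as to return \emph{exact} optimal values $\lambda_i$ and to distinguish saturated from non-saturated individuals reliably: this requires bounding the bit-length of the vertices of $P$ in terms of the encoding length of the values $A(S,u)$ returned by the oracle, together with a standard perturbation/rounding argument to recover the exact optimum from an approximate ellipsoid solution. Verifying these polynomial bit-complexity and exactness conditions, and confirming that the saturation test correctly identifies each block $N_i$, is where the real work lies; the reduction to leximin and the use of duality are otherwise routine.
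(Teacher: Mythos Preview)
Your proposal is correct and follows essentially the same high-level strategy as the paper: frame the problem as leximin optimization over the polytope of achievable satisfaction profiles, solve it by a sequence of max--min LPs, and handle the exponential-size primal by passing to the dual and separating via the weighted optimization oracle and the ellipsoid method.

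The one genuine difference is in how saturated individuals are detected at each level. You run, for every unfrozen $u$, an auxiliary LP maximizing $\cov{D}{u}$ under the current lower-bound constraints and freeze $u$ iff this maximum equals $\lambda_j$; this is sound (your averaging argument shows at least one individual freezes per round, and the maxmin-fair definition pins $F[u]=\lambda_j$ exactly on those $u$), but it costs $O(n)$ LPs per level and thus $O(n^2)$ LPs overall. The paper instead extracts the frozen set directly from a \emph{single} dual optimum per level: by complementary slackness, any $v\notin K$ with $w^*_v>0$ has its primal constraint tight, hence $\cov{F}{v}=\lambda^*$, and these $v$ are added to $K$. A small wrinkle is that the dual optimum may be supported entirely inside $K$; the paper handles this degenerate case by re-optimizing with a secondary objective to force support outside $K$. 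This buys a factor-$n$ saving in the number of LPs and avoids your per-individual saturation tests, at the cost of the extra bookkeeping for the degenerate case. Your concerns about nonnegative separating directions and bit-complexity apply equally to both routes and are handled the same way.
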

\vspace{-0.1cm}
\begin{proof}
Let $F$ be a maxmin-fair distribution. We maintain the invariant that we know the expected satisfaction $\alpha_v$ of $F$ for all $v$ in a subset $K \subseteq \users$:
\begin{equation}\label{eq:know}
 F[v] = \alpha_v \text{ for all $v\in K$ },
   \end{equation}
\begin{equation}\label{eq:inorder}
K \neq \emptyset \implies \cov{F}{v} \ge \max_{w \in K} \alpha_w \text{ for all $v \notin K$}.
\end{equation}

Initially $K = \emptyset$. We show how to augment $K$ in polynomial time while maintaining~\eqref{eq:know} and~\eqref{eq:inorder}, which gives the result since $K = \calU$ will be reached after at most $|\calU|$ iterations.
We need to find the largest minimum expected satisfaction possible outside $K$ for a distribution $D$ subject to the constraints that the expected satisfaction inside $K$ must be equal to  $\alpha_v$.
By~\eqref{eq:know}, ~\eqref{eq:inorder} and the lexicographical definition of maxmin-fairness, for any distribution $D$ the constraints $D[v] = \alpha_v$ for all $v \in K$ are equivalent to the
constraints $D[v] \ge \alpha_v$ for all $v
\in K$. We can write our optimization problem as the following (primal) linear program:
    \begin{equation}\label{lp:fair3}
    \begin{array}{rrclcl}
    \displaystyle \max & \lambda \\
            \\
    \textrm{s.t.}
    & \displaystyle \sum_{S \ni v} -{p_S}\cdot A(S, v) & \le & -\alpha_v & \forall v \in K \\
    & \displaystyle \lambda + \sum_{S \ni v} -{p_S}\cdot A(S, v)  & \le & 0 & \forall v \notin K \\
    & \displaystyle \sum_{S \in \mathcal{S}} p_S  & = & 1 &  \\
    & p_S    &\ge& 0, \\
    \end{array}
    \end{equation}
whose dual is
    \begin{equation}\label{lp:zv3}
    \begin{array}{rrclcl}
    \displaystyle \min & \mu
        - \displaystyle \sum_{v \in K} \alpha_v w_v \\
    \textrm{s.t.}
    & \displaystyle \mu - \sum_{v \in S} w_v \cdot A(S, v)  & \ge& 0 & \forall S \in \sols \\
    & \displaystyle \sum_{v \in \users} w_v  & = & 1 &  \\
    & w_v &\ge& 0. \\
    \end{array}
    \end{equation}
The dual~\eqref{lp:zv3} has $|\users|$ variables but a
possibly exponential number of constraints (one for each candidate solution $S$).
 To get around this
difficulty, observe that it
can be written in the equivalent form
                    {
                        \begin{equation}\label{lp:zv4}
                        \begin{array}{rrclcl}
                        \displaystyle \min & \displaystyle \left[ \max_{S\in\mathcal{S}} \displaystyle \sum_{v \in S} w_v \cdot A(S, v) \right] - \sum_{v \in K} \alpha_v w_v \\
                        \textrm{s.t.}
                        & \displaystyle \sum_{v \in \users} w_v  & = & 1 &  \\
                        & w_v &\ge& 0. \\
                        \end{array}
                        \end{equation}
                    }%
This formulation makes it apparent that, given a weighted optimization oracle, we can construct a \emph{separation oracle} for the dual, i.e., an algorithm that given a candidate solution
to~\eqref{lp:zv3} and a parameter $\lambda$, returns ``yes'' if it is a feasible solution of value at most $\lambda$, and otherwise returns ``no'' along with some violated
constraint or reports the fact that the value of the candidate solution is larger than $\lambda$. Indeed, given $\{w_v\}_{v \in \users}$ and
$\lambda \in \reals$, we can determine if the optimum of~\eqref{lp:zv4} is at most $\lambda$ by using the weight
optimization oracle and answering
``yes'' if the weight of the solution is no larger than $\lambda + \sum_{v\in K} \alpha_v w_v$. Otherwise the separation oracle answers ``no'' and
reports a violated constraint, given either by the
constraint $\sum_v w_v = 1$, which can be checked separately, or by the constraint
    $ \sum_{v \in S^*} w_v \cdot A(S^*, v)  \le \mu, $
where $S^*$ is the solution found by the weighted optimization oracle.

The existence of a separation oracle for a linear program~
implies its polynomial-time solvability via the ellipsoid algorithm~\cite{separation_oracle}.
Hence~\eqref{lp:zv4} can be solved exactly in polynomial time, and we can find an optimal solution to~\eqref{lp:zv4}. Let us denote the optimal primal and dual solutions by $\{p^*_S\}_{S \in \sols}$ and $\{w_v^*\}_{v \in \users}$.
Suppose now that the optimum value of~\eqref{lp:fair3} is $\lambda^*$;
notice that if $K \neq \emptyset$, we must have $\lambda^* \ge \max_{v \in K} \alpha_v$ by our assumptions~\eqref{eq:know} and~\eqref{eq:inorder}.
Let $K' = \support(w^*) \setminus K = \{ v \notin K \mid
w^*_v > 0\}$.
By complementary slackness, for every $v \in K'$ its corresponding primal constraint in~\eqref{lp:fair3} is tight, hence $\sum_{S \ni v} p^*_S A(S, v) = \lambda^*$.
From the lexicographical definition of maxmin-fairness we infer that $F[v] = \lambda^*$ for all $v \in K'$ and $F[v] \ge \lambda^*$ for all $v \notin
K$. Therefore adding $K'$ to $K$ maintains the invariants~\eqref{eq:know} and~\eqref{eq:inorder} if we set $\alpha_v = \lambda^*$ for all $v \in K'$.
This allows us to augment $K$ as long as $K' \neq \emptyset$.

On the other hand, if $K' = \emptyset$, then $K \neq \emptyset$ (as $\sum_v w_v= 1$) 
%and moreover, the objective function did not increase since the previous iteration, hence $\lambda^*= \max_{v \in K} \alpha_v$. 
and $\lambda^*= \max_{v \in K} \alpha_v$, since the objective function did not increase since the last iteration.
In this case we simply add the constraint
%$\max_{S\in\mathcal{S}} \sum_{v \in S} w_v \cdot A(S, v) - \sum_{v \in K} \alpha_v w_v  = \lambda^*$ to ~\eqref{lp:zv4} 
$\mu - \sum_{v \in K} \alpha_v w_v = \lambda^*$ to ~\eqref{lp:zv3} 
and change the objective function to minimize $\sum_{v \in K} w_v$. This also yields an optimal solution to
~\eqref{lp:zv4}. But in this case the new solution $w^{**}$ must satisfy $\support(w^{**}) \setminus K \neq \emptyset$, so we are back to the previous case.

We repeat this process until $K = \users$. The number of iterations is at most $|\users|$, and each iteration runs in polynomial time.
\end{proof}

\begin{proofof}{Theorem~\ref{thm:general_algo}}
Consider the last pair of LP programs used in the proof of
Lemma~\ref{thm:fairness_P} (i.e., when $K \cup K'=\users$). We used the
separation oracle and the ellipsoid algorithm to solve the dual LP~\eqref{lp:zv4}; it remains to show that we can also find a solution to the primal problem~\eqref{lp:fair3}, whose
variables
define the maxmin-fair distribution.
If all numbers $A(S, u)$ are rationals whose numerators and denominators are specified with with $b$ bits,
    then
    the number $T$ of calls to the separation
oracle during the run of the ellipsoid algorithm can be bounded by a polynomial in $|\users|$ and $b$
(see~\cite{separation_oracle}). Consider the subprogram $P$ of~\eqref{lp:zv4} formed by using only these
$T$ constraints, along with  $\sum_v w_v = 1$ and the non-negativity constraints. If we run the
ellipsoid algorithm (which is deterministic)  on the new subprogram $P$ instead, we will find the same solution, because the separation oracle will return the exact same sequence of
solutions.  Since the ellipsoid algorithm is guaranteed to find an optimal solution, it follows that the
reduced set of constraints is enough by itself to guarantee that the optimum of LP~\eqref{lp:zv4} is at least
$\lambda^*$ (hence exactly $\lambda^*$); all other constraints in~\eqref{lp:zv4} are redundant (their inclusion does not further increase the minimum objective value). The dual of this subprogram $P$ is a subprogram $Q$ of the primal
LP~\eqref{lp:fair3} using only $T$ of the variables $p_S$ and having the same optimal value as~\eqref{lp:zv4} and~\eqref{lp:fair3}. Since $Q$ has a polynomial number of variables and constraints, it can be solved
explicitly in polynomial time; any
solution to this reduced primal subprogram  $Q$ gives the desired distribution.
\end{proofof}

Pseudocode for the maxmin-fair algorithm is given below.
%Pseudocode for the maxmin-fair algorithm implicit in the proof is given below.

\begin{algorithm}
\small{
\DontPrintSemicolon
\caption{Maxmin-fair solver}\label{alg:exact}

\SetKwFunction{oracle}{oracle}
\SetKwInOut{Input}{input}
\SetKwInOut{Output}{output}

\Input{User set $\users$; weighted optimization $\oracle_{A}$ for $A\colon \sols \times \users \to \reals$}
\Output{A maxmin-fair distribution for $\calS$}
%\;
$K \gets \emptyset$

$\alpha_u \gets -\infty$ for all $u \in \users$

\While {$K \neq \users$}{
    Solve~\eqref{lp:zv4} using $\oracle_A$ as separation oracle.

    Let $\lambda^*$ be the optimal value.

    Let $\{w_v^*\}_{v \in \users}$ be a solution with value $\lambda^*$ and
    $\support(w^*) \setminus K \neq \emptyset$.

    $K' = \support(w^*)\setminus K = \{ v \notin K \mid w^*_v > 0\}$.

    $\alpha_v \gets \lambda^*$ for all $v \in K'$

    $K = K \cup K'$
}
$V \gets $ violated constraints found by the separation oracle in previous calls.

Solve the subprogram $Q$ of~\eqref{lp:fair3} comprising the constraints in $V$ and the simplex constraints $p_S \ge 0, \sum_S p_S= 1$.

\Return an optimal solution  $\{p^*_S\}_{S \in V}$ to $Q$.
}
\end{algorithm}

\subsection{Proof of Theorem \ref{thm:monge} }\label{sec:monge}
\begin{proof}
      Sort $\calU = \{u_1, \ldots, u_n\}$ by decreasing order of $w$
so that
\begin{equation}\label{eq:good_order}
w(u_1) \ge w(u_2) \ge \ldots \ge w(u_n),
\end{equation}
and let us identify $\calU$ with the set $[n]$ for ease of notation, so that $u_i = i$.
Recall that the positions $[n]$ are sorted by decreasing $f$:
\begin{equation}\label{eq:good_order2}
f(1) \ge f(2) \ge \ldots \ge f(n).
\end{equation}
Define $B(i, u) = f(i) - g(u)$ and $W_{iu} = w(u) \cdot B(i, u)$. Observe that, because of the orderings defined by~\eqref{eq:good_order} and~\eqref{eq:good_order2}, the matrix $W$ satisfies the following ``Monge property'':
if $i < j$ and $u < v$, then $W_{iu} + W_{jv} \ge W_{iv} + W_{ju}$. Indeed,
\begin{align*}
    W_{iu} + W_{jv} &- ( W_{iv} + W_{ju} ) = w(u) (B(i, u) - B(j, u))\\ &+ w(v) (B(j, v) - B(i, v))\\
     &= \left({w(u)} - {w(v)}\right) (f(i) - f(j))
     \ge 0.
\end{align*}
Thus we may apply the algorithm{%
\footnote{In~\cite{CelisSV18} an additional monotonicity property is assumed (that $W_{iu}$ is decreasing with $u$), but it is easy to check that it is not actually needed.}
} 
    from~\cite{CelisSV18} to find a valid ranking $r$ maximizing $\sum_{u \in \users} W_{r(u), u} = \sum_{u \in \users} w(u) \cdot V(r, u),$  as  required by the definition of weighted
    optimization oracle from Section~\ref{sec:algorithms}.
    Then we may compute each $V(r, u)$
    explicitly using $f$ and $g$.
\end{proof}

An important case is where only upper bounds are given in the group constraints, i.e., when the set of valid rankings is of the form
\begin{equation}\label{eq:only_upper}
\sols = \left\{ r \in \calR \mid |\{ u \in C_k \mid r(u) \le i \}| \le u_i^k \ \; \forall i \in [n], k \in [t] \right\}.
\end{equation}
Plugging in the algorithm from~\cite{CelisSV18} into Theorem~\ref{thm:monge} we obtain:
\begin{algorithm}
\small{
\DontPrintSemicolon
\caption{Weighted optimization oracle for ranking with upper bounds}\label{alg:weighted_upper}

\SetKwFunction{oracle}{oracle}
\SetKwInOut{Input}{input}
\SetKwInOut{Output}{output}

\Input{Set of individuals $\users$; weight function $w: \users \rightarrow \reals$; value function $V:\sols \times \users \to \reals$}
\Output{Best response ranking $r$ and $V(r, u)$ for all $u \in \users$}
Sort individuals in $\calU$ in order of {decreasing} weight: $w(u_1) \ge w(u_2) \ge \ldots \ge w(u_n)$.

For each position $i \in [n]$ in increasing order (as in~\eqref{eq:good_order2}), let $u$ be the smallest-index unassigned individual whose additional placement at position $i$ does not violate the group upper bound constraints in the first $i$ positions, and set $r(u) = i$.

Return $r$ and $V(r, u)$ for all $u \in \users$.
}
\end{algorithm}
\vspace{-0.1cm}
\begin{corollary}\label{lem:greedy}
Algorithm~\ref{alg:weighted_upper} is a weighted optimization oracle for ranking with upper bounds.
\end{corollary}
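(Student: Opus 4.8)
The plan is to show that the ranking $r$ returned by Algorithm~\ref{alg:weighted_upper} is valid and attains the maximum in~\eqref{eq:oracle} for $A = V$; since the algorithm also outputs every $V(r,u)$ directly from $f$ and $g$, this establishes that it is a weighted optimization oracle. Writing $V(r,u) = f(r(u)) - g(u)$ as in~\eqref{eq:utility}, the term $\sum_{u} w(u) g(u)$ is independent of $r$, so maximizing $\sum_{u} w(u) V(r,u)$ is equivalent to maximizing $\sum_{u} w(u) f(r(u))$ over valid rankings. This is exactly the Monge objective of Theorem~\ref{thm:monge} specialized to the upper-bound feasible set~\eqref{eq:only_upper}, and Algorithm~\ref{alg:weighted_upper} is the natural greedy specialization: individuals are sorted by decreasing $w$, positions are processed by decreasing $f$, and at each position the highest-weight still-placeable individual is chosen. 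Before arguing optimality I would first note that we may assume each bound $u_i^k$ is non-decreasing in $i$: since $|\{u \in C_k : r(u) \le i\}|$ is non-decreasing in $i$, replacing $u_i^k$ by $\min_{i' \ge i} u_{i'}^k$ yields the same feasible set~\eqref{eq:only_upper} while making the bounds monotone.

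The core of the argument is an exchange/induction: I would prove by induction on $i$ that greedy fills positions $1,\dots,i$ and that there exists an optimal valid ranking $r^*$ agreeing with greedy on those positions. For the inductive step, let $a \in C_k$ be the individual greedy places at position $i$ and let $r^*$ be an optimal valid ranking agreeing with greedy on $1,\dots,i-1$. Since positions $1,\dots,i-1$ coincide, the occupant $b$ of position $i$ in $r^*$ is unassigned at greedy's $i$-th step and placing it there is feasible; hence $w(a) \ge w(b)$ because greedy selects the highest-weight feasible individual (ties broken by index). If $b = a$ we are done. If $b \in C_k$, swapping $a$ (at position $j = r^*(a) > i$) with $b$ keeps all class counts unchanged, so $r^*$ stays valid, and the objective changes by $(w(a)-w(b))(f(i)-f(j)) \ge 0$. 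The delicate case is $b \notin C_k$, which I would handle with a two-step exchange: first swap $a$ with the class-$k$ individual occupying the earliest position $\ge i$ in $r^*$ (a same-class swap, hence feasible, and non-decreasing in objective because greedy's $a$ has the largest weight among unassigned class-$k$ individuals), so that afterwards no class-$k$ element sits strictly between position $i$ and $a$'s position; then swap $a$ forward into position $i$ against $b$.

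The step I expect to be the main obstacle is verifying that this final forward swap preserves validity under the upper-bound constraints. Moving $a$ into position $i$ raises the class-$k$ count only in the prefixes $\{1,\dots,p\}$ with $i \le p < r^*(a)$, and because of the preparatory same-class swap there is no other class-$k$ individual in those positions, so the new class-$k$ prefix count equals $c+1$, where $c$ is the class-$k$ count in the common prefix $1,\dots,i-1$. Greedy placed $a$ feasibly at position $i$, so $c+1 \le u_i^k$, and monotonicity of the bounds gives $u_i^k \le u_p^k$, confirming feasibility for class $k$; the displaced $b$ only leaves those prefixes, which can never violate an upper bound. The objective change is again $(w(a)-w(b))(f(i)-f(\,\cdot\,)) \ge 0$. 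This produces an optimal valid ranking agreeing with greedy on $1,\dots,i$ and, incidentally, shows greedy never stalls, since $r^*$'s occupant of position $i$ always furnishes a feasible choice. Taking $i = n$ yields that greedy's output is a valid maximizer, completing the proof; the runtime is polynomial, dominated by the sort and the $O(nt)$ feasibility checks.
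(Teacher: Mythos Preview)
Your argument is correct. The exchange-based induction works: the monotonization of the bounds is valid, the observation that greedy's choice $a$ must be the highest-weight unassigned individual in its own class (since any higher-weight same-class individual would be equally feasible) is the crucial point that makes both swaps objective-nondecreasing, and your feasibility check after the two-step exchange is sound. The only minor imprecision is the runtime remark: the greedy loop may inspect several candidates per position, so the bound is $O(n^2 t)$ rather than $O(n t)$, but this does not affect the claim.

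Your route differs from the paper's. The paper does not prove this corollary from scratch; it derives it from Theorem~\ref{thm:monge}, whose proof establishes that $W_{iu} = w(u)(f(i)-g(u))$ satisfies the Monge condition under the orderings~\eqref{eq:good_order}--\eqref{eq:good_order2} and then invokes the constrained-ranking algorithm of Celis et al.~\cite{CelisSV18}, which in the upper-bound-only case specializes precisely to Algorithm~\ref{alg:weighted_upper}. So the paper's argument is a two-line reduction to an external result, whereas yours is a self-contained exchange proof that avoids both the Monge machinery and the citation. The paper's approach is shorter and more general (it covers the full constraint set~\eqref{eq:constraints}, not just upper bounds), while yours is more elementary and makes the corollary independent of~\cite{CelisSV18}.
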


\section{Deterministic Baseline}
\label{sec:baseline}
In this section we present an exact optimal solution for the deterministic version of the problem as formulated in
(\ref{eq:individual}). This is useful for our experiments (Section \ref{sec:experiments}) as it allows us  to quantify the
advantage of probabilistic rankings over deterministic rankings in terms of the amount of individual fairness maintained.

Although Celis et al.~\cite{CelisSV18} study the problem of the form~\eqref{eq:global}, we devise a variant of their algorithm to deal with the problem as in~\eqref{eq:individual}:
this variant can be shown to provide the \emph{optimal deterministic ranking} solution to the constrained ranking problem \eqref{eq:individual} when the group-fairness constraints
are expressed in terms of upper bounds on the number of elements from each class that appear in the top-$k$ positions, 
    as in~\eqref{eq:only_upper}.
As noted in Section \ref{sec:problem} and  in \cite{CelisSV18}, in the case of two disjoint groups (e.g., a binary protected attribute such as gender), lower bound constraints may be replaced with an equivalent set of upper bound constraints.

\begin{algorithm}
\small{
\DontPrintSemicolon
\caption{Deterministic baseline}\label{alg:baseline}

\SetKwFunction{oracle}{oracle}
\SetKwInOut{Input}{input}
\SetKwInOut{Output}{output}

\Input{Set of individuals $\users$; relevance function $R: \users \rightarrow \reals^{\ge 0}$
}
\Output{Deterministic ranking $r$.}
%\;
Sort individuals in $\calU$ in order of {decreasing} score: $R(u_1) \ge R(u_2) \ge \ldots \ge R(u_n)$.

For each position $i \in [n]$ in increasing order (as in~\eqref{eq:good_order2}), let $u$ be the smallest-index unassigned individual whose additional placement at position $i$ does not violate the group upper bound constraints in the first $i$ positions, and set $r(u) = i$.

Return $r$.
}
\end{algorithm}

At the basis of our deterministic baseline (Algorithm \ref{alg:baseline}) lies the idea of using
the function
$\softmin(x_1, \ldots, x_n) = -\ln (\sum_{i=1}^n e^{-x_i})$
to force the algorithm from~\cite{CelisSV18} to approximately maximize a minimum instead of a sum, and observe that the limiting behaviour of the function
$x \to \softmin(M x) / M$ must also occur in this case for finite $M$ because the algorithm from~\cite{CelisSV18} does not depend on the specific values of the matrix $W_{ij}$, but only on
the existence of an ordering of
rows/columns
of $W$ where the Monge property holds (see Section~\ref{sec:monge}).

\begin{theorem}\label{lem:greedy2}
When the group-fairness constraints are defined only by upper bounds, Algorithm \ref{alg:baseline} returns a ranking $r'$ 
such that
$$r'  \in \argmax_{r \in \mathcal{S}} \min_{u \in \users} V(r,u).$$
\end{theorem}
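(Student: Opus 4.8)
The plan is to exploit the softmin smoothing mentioned just before the statement, turning the $\max$--$\min$ objective into a family of sum-maximization problems indexed by a smoothing parameter $M>0$, each of which is solved exactly by the Monge/greedy machinery of Theorem~\ref{thm:monge}, and then to let $M\to\infty$. The decisive observation is that Algorithm~\ref{alg:baseline} is literally the greedy procedure of~\cite{CelisSV18} driven by a fixed ordering of individuals (decreasing $R$) and of positions (decreasing $f$), and that this greedy consults only those two orderings, never the numeric entries of the weight matrix. A single run therefore yields one ranking $r'$ that is simultaneously optimal for \emph{every} weight matrix sharing that Monge ordering.

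Concretely, for a fixed $M>0$ I would introduce the matrix
\[
\tilde W^{(M)}_{iu} \;=\; -\,e^{-M f(i)}\,e^{M g(u)},
\]
so that for any ranking $r$, $\sum_{u\in\users}\tilde W^{(M)}_{r(u),u} = -\sum_{u\in\users} e^{-M V(r,u)}$. Since $\softmin_u\!\big(M\,V(r,u)\big) = -\ln\sum_{u} e^{-M V(r,u)}$, maximizing $\sum_u \tilde W^{(M)}_{r(u),u}$, maximizing $\softmin_u(M\,V(r,u))$, and minimizing $\sum_u e^{-MV(r,u)}$ are all equivalent over $\sols$. Ordering positions by decreasing $f$ and individuals by decreasing $R$ (hence decreasing $g$), write $a_i = e^{-M f(i)}$ (nondecreasing in $i$) and $b_u=e^{M g(u)}$ (nonincreasing in the individual index); then $\tilde W^{(M)}_{iu}=-a_i b_u$ and a one-line computation gives, for $i<j$ and $u<v$,
\[
\tilde W^{(M)}_{iu}+\tilde W^{(M)}_{jv}-\tilde W^{(M)}_{iv}-\tilde W^{(M)}_{ju} \;=\; (a_i-a_j)(b_v-b_u)\;\ge\;0,
\]
so the Monge property holds for every $M>0$ under exactly the orderings used by Algorithm~\ref{alg:baseline}. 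Because the greedy execution depends only on the (decreasing-$R$) ordering, which is independent of $M$, the algorithm returns the same $r'$ for all $M$, and by~\cite{CelisSV18} this $r'$ maximizes $\sum_u\tilde W^{(M)}_{r(u),u}$; hence $r'$ maximizes $\softmin_u(M\,V(r,u))$ over $\sols$ for every $M>0$.

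It then remains to pass to the limit. Writing $n=|\users|$, the standard sandwich
\[
\min_{u} V(r,u)-\frac{\ln n}{M}\;\le\;\frac1M\,\softmin_u\!\big(M\,V(r,u)\big)\;\le\;\min_{u} V(r,u)
\]
holds for every $r$. Let $r^{\mathrm{opt}}\in\argmax_{r\in\sols}\min_u V(r,u)$. Combining the upper bound for $r'$, the soft-optimality of $r'$ just established, and the lower bound for $r^{\mathrm{opt}}$,
\[
\min_u V(r',u)\;\ge\;\frac1M\softmin_u\!\big(M V(r',u)\big)\;\ge\;\frac1M\softmin_u\!\big(M V(r^{\mathrm{opt}},u)\big)\;\ge\;\min_u V(r^{\mathrm{opt}},u)-\frac{\ln n}{M}.
\]
Since the two extreme quantities do not depend on $M$, letting $M\to\infty$ yields $\min_u V(r',u)\ge\min_u V(r^{\mathrm{opt}},u)$, and optimality of $r^{\mathrm{opt}}$ forces equality, i.e.\ $r'\in\argmax_{r\in\sols}\min_u V(r,u)$.

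I expect the main obstacle to be justifying the simultaneous optimality of the single greedy output across all $M$. This rests on two points I would argue explicitly: that the greedy of~\cite{CelisSV18} consults only the Monge ordering (so its output is genuinely $M$-independent), and that~\cite{CelisSV18} remains valid for the \emph{signed} Monge matrices $\tilde W^{(M)}$ (whose entries are negative), which is exactly the relaxation noted in the footnote to Theorem~\ref{thm:monge} where the superfluous monotonicity assumption is dropped. Everything else---the Monge inequality, the softmin sandwich, and the limit---is routine.
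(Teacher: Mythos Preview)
Your proposal is correct and follows essentially the same route as the paper: define the exponential weight matrix $W_{iu}=-e^{-M(f(i)-g(u))}$, verify the Monge property under the decreasing-$g$/decreasing-$f$ orderings, observe that the greedy of~\cite{CelisSV18} depends only on those orderings (hence its output is $M$-independent), and let $M\to\infty$. Your limit step via the explicit softmin sandwich is in fact a bit more carefully spelled out than the paper's one-line appeal to $\lim_{M\to\infty}\softmin(Mz)/M=\min(z)$.
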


\section{Experiments}
\label{sec:experiments}

\spara{Datasets.} We use two real-world datasets containing
gender information and one score for each individual.
Our first dataset comes
from the IIT Joint Entrance Exam (known as IIT-JEE 2009)~\cite{CelisMV20}\footnote{\url{https://jumpshare.com/v/yRUSJrnw3bzGGNf0jL3A}},  from which we select the top $N=1000$ scoring males and the top $N$ scoring females. The score distribution is heavily biased at the top, with just four females making the top-100.
Our second dataset is much less skewed: it contains
admissions data from all of the public law schools in the United States\footnote{\url{http://www.seaphe.org/databases.php}}. We use the top
$N=1000$ LSAT scorers, of whom 362 are female.

\begin{table}[t!]
\centering
    \captionsetup{font=small}
\caption{Minimum expected value produced by \textsf{MF}(0) and optimal deterministic solution, spread (maximum - minimum) of expected value, Gini inequality index (\%), and discounted cumulative gain for \textsf{IIT-JEE} and \textsf{Law-schools} datasets for different values of $\alpha$. \label{tab:minsat}}
  \vspace{-2mm}
\begin{adjustbox}{width=1.05\linewidth}
\hspace{-8mm}\begin{tabular}{r|ccc|c|ccc|}
\multicolumn{1}{c}{} & \multicolumn{3}{c}{\textsf{IIT-JEE}} & \multicolumn{1}{c}{} & \multicolumn{3}{c}{\textsf{Law-schools}}\\
  \cline{2-4}   \cline{6-8}
\multicolumn{1}{c|}{} & $\alpha = 0.1$ & $\alpha = 0.2$ & $\alpha =0.3$ & \multicolumn{1}{c|}{} &  $\alpha = 0.1$ & $\alpha = 0.2$ & $\alpha =0.3$\\
  \cline{2-4}   \cline{6-8}
    $minV(MF)$        & -26.82 & -96.44 & -185.7 & \multicolumn{1}{c|}{} &  -0.87 &-1.03  & -5.48 \\
    $minV(det)$       & -44   &  -180   & -358 & \multicolumn{1}{c|}{} &  -1  & -2   & -10      \\   \cline{2-4}   \cline{6-8}
    $sprd(MF)$        & 53.76 & 193.4 & 372.8& \multicolumn{1}{c|}{} &  0.97 & 1.14 &  6.19  \\
    $sprd(det)$        & 433 & 899 & 1192  & \multicolumn{1}{c|}{} &  6  & 7  & 32   \\   \cline{2-4}   \cline{6-8}
    $gini(MF)$        &  0.6714 & 2.413 & 4.658 & \multicolumn{1}{c|}{} &  0.0005   & 0.0011 & 0.05 \\
    $gini(det)$       &  1.062  & 3.772 & 7.027 & \multicolumn{1}{c|}{} &  0.0010   & 0.0020  & 0.08 \\  \cline{2-4}   \cline{6-8}
    $DCG(MF)$           & 84847$\pm$118  & 84444$\pm$180 & 84062$\pm$242  & \multicolumn{1}{c|}{} &  31379$\pm$ 1 &  31379$\pm$ 1 & 31379$\pm$ 1  \\
    $DCG(det)$          & 85123         & 85008        & 84807      & \multicolumn{1}{c|}{} &  31380 & 31380 & 31380  \\
 \cline{2-4}   \cline{6-8}
\end{tabular}
\end{adjustbox}
\end{table}
\begin{figure}[t!]
    \captionsetup{font=small}
  \centering
 \hspace{-4mm}\includegraphics[trim=52mm 0mm 60mm 12mm, clip=true,width=1.05\linewidth]{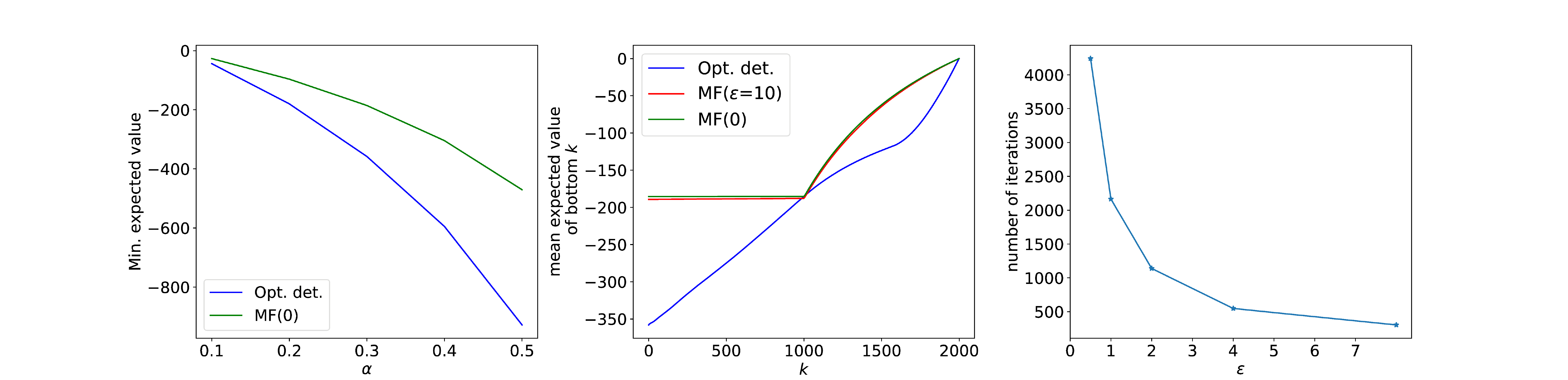}
  \caption{\textsf{IIT-JEE}: Minimum expected value produced by \textsf{MF}(0) and optimal deterministic solution (left); distribution of expected value $V(r,u)$ (for $\alpha = 0.3$)  (center); number of iterations (calls to the optimization oracle) vs error $\epsilon$ (right).}\label{fig:exp}
  \vspace{-4mm}
\end{figure}

\spara{Settings.} We impose the following group-fairness constraints, parameterized by $\alpha \in [0,\frac{1}{2}]$: at least $\lceil \alpha \cdot k - 1 \rceil$ females should be ranked in the top $k$, for $k=1,2,\ldots,2N$.
We employ $V(r,u) = r^*(u) - r(u)$ as our value function, where $r^*$ is the ranking by decreasing score.

\spara{Algorithms.} We implement our maxmin-fair solver for ranking, using the technique
described in Appendix~\ref{sec:apx_fairness} to solve the LPs approximately with an additive error parameter $\epsilon$; $\epsilon=1$ corresponds to an additive error in expected ranking
position of 1 (out of 2000 for IIT-JEE and out of 1000 for Law school).
We denote by $\textsf{MF}(\epsilon)$ the ranking distribution produced by
our approximate maxmin-fair algorithm with parameter $\epsilon$,
    and by $\textsf{MF(0)}$ the one obtained
with the smallest $\epsilon$ tested (0.5). 
Our code is available on Dropbox\footnote{\url{https://www.dropbox.com/sh/0kc17h36p632m0a/AACyO_PNPeBOJvPirEhQzFUDa?dl=0}}.

In order to quantify the advantage of probabilistic rankings over the optimal deterministic ranking, we also test the deterministic algorithm we devised (Algorithm~\ref{alg:baseline}) to solve the problem in~\eqref{eq:individual}.
This provides the strongest possible deterministic competitor for our algorithm.

\spara{Measures.}
Besides comparing the minimum expected value, which is the main focus of our work, we also report other measures of inequality of the produced solution:
spread (maximum - minimum) of expected value and Gini inequality index~\cite{gini1921measurement} (after normalizing values to the interval $[0,1]$ to make the index well-defined).
Finally, to examine if there is a loss in global ranking quality, we use the popular \emph{discounted cumulative gain}
metric
\cite{jarvelin2002cumulated,wang2013theoretical,CelisMV20,CelisSV18,SinghJ18,BiegaGW18},
    which can be defined as $DCG(r) = \sum_{u \in \users}  score(u) / \log( r(u) + 1)$.

\spara{Results.}
The first two rows of Table~\ref{tab:minsat} report the expected value (over a random ranking from the distribution) of the solution for the worst-off individual; we can observe that the maxmin-fair solution improves significantly on the optimal deterministic solution, with the gap between the two increasing with $\alpha$ (the strength of the group-fairness constraint). The same can be observed in Figure \ref{fig:exp} (left) and Figure \ref{fig:exp2} (left) for the two datasets.
We do not report the \emph{average} value of the solution for all individuals because it is the same for every ranking, as rankings are bijections onto
 $[n]$.

In Table~\ref{tab:minsat} we can also observe that the inequality measures for the maxmin-fair solution are always smaller than the optimal deterministic one.
Finally, we report the ranking-quality  measure DCG. Since, unlike the three other measures in Table~\ref{tab:minsat} DCG is
defined for deterministic rankings, we report average and standard deviation. We see that DCG is nearly the same for \textsf{MF}(0) and
det. Thus in this experiment \emph{ improving
individual fairness with respect to a group-only fairness solution incurs a negligible loss in DCG.}

Figure~\ref{fig:exp} (center) and Figure~\ref{fig:exp2} (center) depict the average expected value of the bottom $k$ individuals in three solutions: our best solution $\textsf{MF}(0)$, an approximate solution with $\epsilon=10$, and
the optimal deterministic solution. The peculiar behaviour  of the curve in Figure~\ref{fig:exp} (center)  (constant up to roughly $k=n/2$ for $\textsf{MF}$) is due to the skew of the input scores, which forces the
    maxmin-fair solution to essentially increase the ranking positions of most men by a certain minimum amount $X$ and decrease that of most women by $X$ with the best possible
        distribution. 
We notice that the maxmin-fair solution yields stronger cumulative value to the worst-off users than the other two do, for {any} $k$. In particular, the  maxmin-fair solution found Lorenz-dominates the approximate one and the deterministic one, in accordance with Theorem~\ref{thm:lorenz}.
Because of the error allowed, the approximate solution $\textsf{MF}(10)$ stays somewhat below $\textsf{MF}(0)$ and its curve crosses that of the deterministic solution sporadically before distancing itself again. Finally,
Figure~\ref{fig:exp} (right) and Figure~\ref{fig:exp2} (right) show the number of calls to the optimization oracle (which is also the size of the support of the ranking distribution) as a function of the additive error parameter $\epsilon$.
Runtime is linear in the number of calls to the optimization oracle. The longest runtime of our Python implementation of MF (which occurred on the IIT-JEE dataset with $\alpha = 0.3$ and $\epsilon = 0.5$) was
under one hour.

\begin{figure}[t!]
    \captionsetup{font=small}
  \centering
\hspace{-5mm}\includegraphics[width=1.05\linewidth]{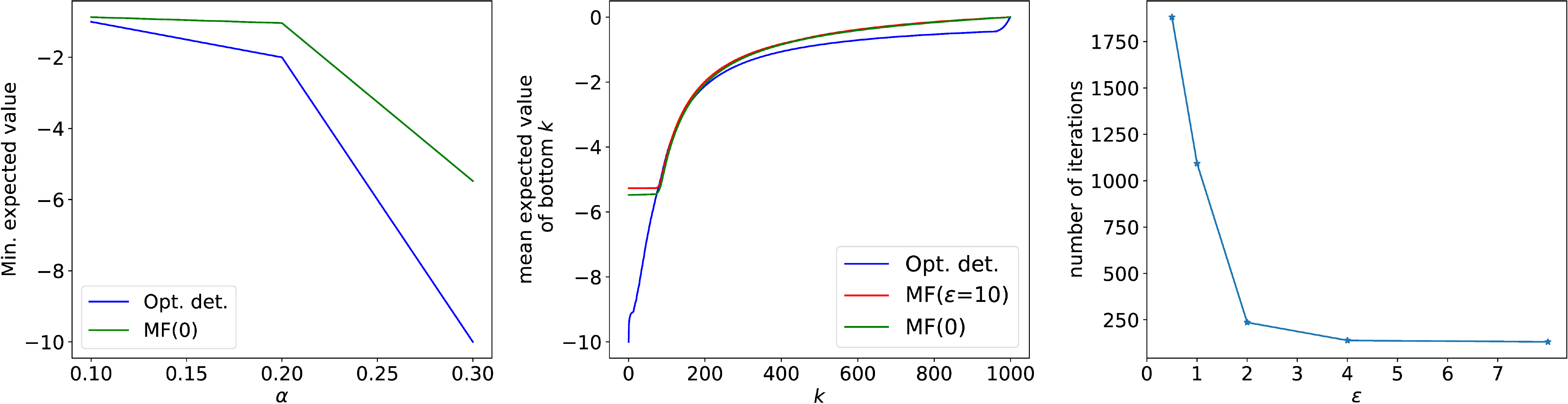}
  \caption{\textsf{Law-schools}: \ Minimum expected value produced by \textsf{MF}(0) and optimal deterministic solution (left); distribution of expected value $V(r,u)$ (for $\alpha = 0.3$)  (center); number of iterations (calls to the optimization oracle) vs error $\epsilon$ (right).}\label{fig:exp2}
  \vspace{-4mm}
\end{figure}

\section{Conclusions} \label{sec:conclusions}
We introduced the problem of minimizing the amount of individual unfairness introduced when enforcing group-fairness constraints in ranking. We showed how a randomized approach ensures more individual fairness than the optimal solution to the deterministic formulation of the problem.
We proved that our maxmin-fair ranking distributions provide strong fairness guarantees such as maintaining within-group meritocracy and, under a mild assumption (i.e., when we
        have only upper-bound constraints or when the protected attribute is binary), they have the desirable properties of being
generalized Lorenz-dominant, and minimizing social inequality. Besides the technical contributions, our work shows how randomization is key in reconciling individual and group fairness. In our future work we plan to extend this intuition beyond ranking.

\section*{Acknowledgements}
The authors acknowledge support from Intesa Sanpaolo Innovation Center.
The funders had no role in study design, data collection and analysis, decision to publish, or preparation of the manuscript.

\bibliographystyle{ACM-Reference-Format}
\bibliography{references}

\appendix
\section{Appendix}
We present here the proofs missing from the main text.

\subsection{Proof of Theorem~\ref{thm:meritocracy}}
\begin{proof}
Recall the form of our value function $V(r, u) = f(r(u)) - g(u)$ and
 observe that for any distribution $D$,
\begin{equation}\label{eq:blah2}
D[u] = \expect_{r \sim D} [V(r, u)] = \expect_{r \sim D}[ f(r(u))] - g(u).
\end{equation}

Let us write $r \in F$ to mean that $r$ occurs with non-zero probability in the maxmin-fair distribution $F$.
We show that if $u_1, u_2$ belong to the same group and $R(u_1) \ge R(u_2)$, then the following holds:
\begin{equation}\label{eq:oneoranother}
\text{$ f(r(u_2)) > f(r(u_1))$ for some $r\in F \implies F[u_1] \ge F[u_2]$ },  
\end{equation}
\begin{equation}\label{eq:blah}
\expect_{r \in F} [f(r(u_1))] \ge \expect_{r \in F} [f(r(u_2))].
\end{equation}
Suppose by contradiction that~\eqref{eq:oneoranother} fails, so
$ f(r(u_2)) > f(r(u_1))$ but $F[u_1] < F[u_2]$ for some $r \in F$.
Let $\hat{r}$ denote a ranking which is identical to $r$ except that the positions of $u_1$ and
$u_2$ are swapped.
As $u_1$ and $u_2$ belong to the same group, swapping their positions will not affect the group-fairness constraints, so  $\hat{r}$ is a valid ranking too. 
Consider a distribution $D$ over valid rankings $\calS$ obtained by drawing $s$ from $D$ and returning $s$ if $s \neq r$ and $\hat{r}$ if $s = r$.
We have
$ \expect_{s \in D} [ f(s(u_1)) ] - \expect_{s \in F} [ f(s(u_1)) ] = \Pr_{s \in F} [ s = r ] \cdot ( f(r(u_2)) - f(r(u_1)) ) > 0,$
so
$ \expect_{s \in D} [ f(s(u_1)) ] > \expect_{s \in F} [ f(s(u_1)) ] $ and therefore, by~\eqref{eq:blah2},
$ D[u_1] > F[u_1] .$
Moreover $\forall v \in \users \setminus \{u_1,u_2\}$ it holds that $F[v] = D[v]$. Therefore $D$ is a distribution improving the expected satisfaction of $u_1$ w.r.t. $F$ and such
that no $v \in \users$ exists such that $F[v] \leq F[u_1]$ and $D[v] < F[v]$, thus contradicting the assumption that $F$ is maxmin-fair. This proves~\eqref{eq:oneoranother}.

To prove~\eqref{eq:blah},
consider first the case $F[u_1] \ge F[u_2]$. Since $R(u_1) \ge R(u_2)$ implies $g(u_1) \ge g(u_2)$, in this case substituting $D = F$ in ~\eqref{eq:blah2} we trivially
obtain~\eqref{eq:blah}. If instead $F[u_1] < F[u_2]$ then, by~\eqref{eq:oneoranother}, we conclude that $f(r(u_1)) \ge f(r(u_2))$ for all $r \in F$, which
implies~\eqref{eq:blah}, as we wished to show.
\end{proof}

\subsection{Proof of Theorem~\ref{thm:lorenz}}
In this subsection we consider the case where we only have upper bounds in the group-fairness constraints.

First we need a result characterizing the minimum expected satisfaction of a maxmin-fair distribution. It has been inspired by the proof of~\cite{dgs2020}[Theorem
15]. While~\cite{dgs2020} only considers matroid problems (which do not cover constrained ranking), our key insight is that this type of argument can be generalized \emph{whenever there is a weight
    optimization oracle depending only on the weight order} (as opposed to the numerical values of the weights). This is true of the greedy algorithm from
    Corollary~\ref{lem:greedy} (Algorithm~\ref{alg:weighted_upper}).
\begin{lemma}\label{lem:characterization}
Let $\lambda: 2^{\calU} \to \reals$. There is a distribution of valid rankings such that $D[u] \ge \lambda_u$ if and only if
\begin{equation}\label{eq:condition}
 \max_{S \in \sols} \sum_{u \in X} A(S, u) \ge \sum_{u \in X} \lambda_u \qquad \text{ for all $X \subseteq \calU$} .
\end{equation}
\end{lemma}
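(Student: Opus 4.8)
The plan is to prove the two implications separately. The forward direction (necessity) is a one-line averaging argument: if a distribution $D$ over valid rankings satisfies $D[u] \ge \lambda_u$ for every $u$, then for each $X \subseteq \calU$,
\[
\sum_{u \in X} \lambda_u \le \sum_{u \in X} D[u] = \expect_{S \sim D}\Big[\sum_{u \in X} A(S,u)\Big] \le \max_{S \in \sols} \sum_{u \in X} A(S,u),
\]
which is exactly~\eqref{eq:condition}. All the content is in the reverse direction, so I will concentrate on it.

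For sufficiency I would first recast the existence of a distribution with $D[u]\ge\lambda_u$ for all $u$ as a minimax statement. Pitting the choice of $D$ in the simplex $\Delta(\sols)$ against an adversary choosing a probability vector $q$ over $\calU$, with bilinear payoff $\sum_u q_u\,(D[u]-\lambda_u)$, von Neumann's minimax theorem (equivalently LP duality) gives
\[
\max_{D}\ \min_{u \in \calU}\big(D[u]-\lambda_u\big) \;=\; \min_{q}\Big(\max_{S\in\sols}\sum_u q_u\, A(S,u) \;-\; \sum_u q_u \lambda_u\Big),
\]
using that the inner minimum of a linear form over the simplex is attained at a vertex (hence equals $\min_u$), and that $\max_D$ of a linear functional is attained at a pure solution $S$. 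A feasible $D$ therefore exists if and only if the right-hand side is nonnegative, i.e. if and only if $\max_{S}\sum_u w(u)A(S,u)\ge\sum_u w(u)\lambda_u$ for every nonnegative weight vector $w$. What remains is to derive this weighted inequality from the subset condition~\eqref{eq:condition}.

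This derivation is the crux, and it is exactly where the order-only structure of the oracle enters. Fix $w\ge 0$ and (by continuity, assuming distinct values) sort $\calU=\{u_1,\dots,u_n\}$ so that $w(u_1)\ge\cdots\ge w(u_n)$; set $X_j=\{u_1,\dots,u_j\}$. Let $S^*=S^*(w)$ be the ranking returned by the greedy oracle of Corollary~\ref{lem:greedy} (Algorithm~\ref{alg:weighted_upper}), whose output depends only on the order~\eqref{eq:good_order} of the weights, not on their values. The key claim is that this single $S^*$ is simultaneously optimal for every prefix, $\sum_{u\in X_j}A(S^*,u)=\max_{S\in\sols}\sum_{u\in X_j}A(S,u)$: indeed the indicator weights $\mathbb{1}_{X_j}$ induce the very same sorted order $u_1,\dots,u_n$ (with consistent tie-breaking), so the order-only oracle returns $S^*(\mathbb{1}_{X_j})=S^*$, and by the defining property of a weighted optimization oracle this solution maximizes $\sum_u(\mathbb{1}_{X_j})_u A(S,u)=\sum_{u\in X_j}A(S,u)$. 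Combined with~\eqref{eq:condition} this yields $\sum_{u\in X_j}A(S^*,u)\ge\sum_{u\in X_j}\lambda_u$ for all $j$. Applying the Abel summation-by-parts identity
\[
\sum_u w(u)\,c(u)=\sum_{j=1}^n\big(w(u_j)-w(u_{j+1})\big)\sum_{u\in X_j}c(u),\qquad w(u_{n+1}):=0,
\]
with $c=A(S^*,\cdot)$ and then $c=\lambda$, and using the nonnegativity of the coefficients $w(u_j)-w(u_{j+1})$, gives $\sum_u w(u)A(S^*,u)\ge\sum_u w(u)\lambda_u$, hence the desired weighted inequality.

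I expect the main obstacle to be precisely this simultaneous-optimality step. The subset condition only controls $\max_S\sum_{u\in X}A(S,u)$ for each $X$ in isolation, whereas the minimax reduction demands a \emph{single} solution that performs well against an arbitrary weighting; the bridge is the observation that the greedy oracle's output is a function of the weight order alone, so it optimizes all threshold sets at once. This is exactly the generalization of the matroid argument of~\cite{dgs2020} alluded to before the statement, and the reason the result is stated for the upper-bounds-only regime where Algorithm~\ref{alg:weighted_upper} applies. Minor care is needed with tie-breaking and with ties in $w$, but these are harmless: by continuity we may assume the $w(u_i)$ distinct, and only the layers with $w(u_j)>w(u_{j+1})$ contribute, which correspond to genuine threshold sets $\{u:w(u)\ge w(u_j)\}$.
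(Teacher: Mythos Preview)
Your proof is correct and follows essentially the same approach as the paper's: both set up the same two-player zero-sum game, invoke the minimax theorem to reduce to a weighted inequality over all $w\in\Delta(\calU)$, and then exploit the order-only property of the greedy oracle (Corollary~\ref{lem:greedy}) together with an Abel/telescoping identity to collapse the weighted condition to the subset condition~\eqref{eq:condition}. The only cosmetic difference is that the paper phrases the last step as showing the dual minimum is attained at a uniform weight $w=\tfrac{1}{|X|}\mathbb{1}_X$ (via the change of variables $d_i=w_{\pi(i)}-w_{\pi(i+1)}$, which is exactly your summation-by-parts), whereas you prove the weighted inequality directly from the prefix inequalities; the underlying computation is identical.
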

\begin{proofof}{Lemma~\ref{lem:characterization}}
Given a set $E$, let $\Delta(E) = \{ x:E \to \reals^{\ge 0} \mid \sum_{e \in E} x_e = 1 \}$ denote the set of distributions over $E$.
Consider the following two-player zero-sum game: Player 1 (the maximizer) chooses a distribution of solutions $p \in \Delta(\sols)$, Player 2 (the minimizer) chooses a distribution of users $w \in
\Delta(\users)$, and the payoff for Player 1 when she plays $S \in \sols$ and Player 2 plays $u \in \users$ is 
$A(S, u) - \lambda_u$. The value of this game is
$$ v = \max_{p \in \Delta(\sols)} \min_{w \in \users} [ \sum_{S\in \sols} p_S (A(S, u) - \lambda_u) ];  $$
the required distribution exists when $v \ge 0$.
By Von Neumann's minimax theorem we have
\begin{equation}\label{eq:minmax}
v = \min_{w \in \Delta(\users)} \max_{S \in \sols} [ \sum_{u \in \users} w_u (A(S, u) - \lambda_u) ].
\end{equation}
Thus, $v \ge 0$ exactly when for all $w \in \Delta(\calU)$,  it holds that
\begin{equation}\label{eq:minmax2}
 \max_{S \in \sols} \sum_{u \in \users} w_u A(S, u) \ge \sum_{u\in \users} w_u \lambda_u.
\end{equation}
The result will follow if we can show that the minimization problem~\eqref{eq:minmax} has an optimal solution of the form
\begin{equation}\label{eq:unweighted}
w_u =
        \begin{cases}
          \frac{1}{|X|}, & \text{if } u\in X \\
          0, & \text{otherwise}
        \end{cases}
\end{equation}
for some non-empty $X \subseteq \calU$, because for $w_u$ of the form~\eqref{eq:unweighted}, \eqref{eq:minmax2} simplifies to~\eqref{eq:condition} on multiplication by $|X|$.
We have seen in Corollary~\ref{lem:greedy} that for each $w$,   $\max_{S \in \sols} \sum_{u\in \users} w_u (A(S, u) - \lambda_u)$ can be optimized by an oracle (Algorithm~\ref{alg:weighted_upper}) that only depends on the order
determined by $w$ (observe that subtracting $\lambda_u$ from $A(S, u)$ amounts to adding $\lambda$ to the function $g$ in the definition of $A(S, u)$). In other words, for any bijection
$\pi: [n] \to \users$ and any weight $w \ge 0$ \emph{compatible with $\pi$}
(i.e., satisfying $w_{\pi(1)} \ge w_{\pi(2)} \ge \ldots w_{\pi(n)}$), we have
$$ \max_{S \in \sols} \sum_{u\in \users} w_u (A(S, u) - \lambda_u) = \sum_{u \in \users} w_u ( A(G(\pi), u) - \lambda_u ), $$
where $G(\pi)$ is the solution returned by the greedy weighted optimization oracle.

Fix an order $\pi: [n] \to \users$ and let $B_u = A( G(\pi), u) - \lambda_u$. Consider the minimization problem
\begin{align}\label{eq:subproblem}
 \min \left\{  \sum_{u\in \users} w_u B_u \mid w \in \Delta(\users), \text{$w$ compatible with $\pi$} \right\}.
 \end{align}
Let $d_{n} = w_{\pi(n)}$ and $d_i = w_{\pi(i)} - w_{\pi(i+1)}$.
The compatibility conditions for $w$ may be rewritten as $d_i \ge 0$ for all $i$, and the distributional constraint $\sum_i w_i = 1$
becomes $\sum_i i \cdot d_i = 1$. If we write $z_i = \sum_{j \le i} B_{\pi(j)}$, then~\eqref{eq:subproblem} becomes
\begin{equation}\label{eq:ztt}
 \min \left\{ \sum_{i \in [n]} d_i\cdot z_i  \mid d_i \ge 0, \sum_{i\in[n]} i d_i = 1 \right\}  = \min \left\{ \frac{z_t}{t} \mid t \in [n] \right\};
\end{equation}
the last equality is easily seen to hold because $z_i \le \lambda\cdot i$ for all $i$ implies $\sum_i d_i z_i \le \lambda \sum_i i \cdot d_i = \lambda.$
Therefore for each $\pi$, an optimal solution to~\eqref{eq:subproblem} is of the form~\eqref{eq:unweighted}, where
$X = \{ {\pi(1)}, \ldots, {\pi(t)} \}$;
hence the same
is also true of an optimal solution to~\eqref{eq:minmax}.
\end{proofof}

\begin{corollary}\label{col:characterization}
The minimum expected satisfaction in any maxmin-fair distribution of valid rankings
is
$\ \min_{\emptyset \neq X\subseteq \calU} \frac{\max_{S \in \sols} \sum_{u \in \users} A(S, u)}{|X|} . $
\end{corollary}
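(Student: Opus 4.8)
The plan is to obtain this corollary as a direct specialization of Lemma~\ref{lem:characterization} to a uniform threshold. First I would observe that, by the lexicographic characterization of maxmin-fairness recalled in Section~\ref{sec:maxmin_general}, all maxmin-fair distributions share the same sorted expected-satisfaction vector $\scov{F}$: if $F$ and $F'$ are both maxmin-fair then $\scov{F} \succeq \scov{F'}$ and $\scov{F'} \succeq \scov{F}$, forcing $\scov{F} = \scov{F'}$. In particular the smallest entry of $\scov{F}$, namely the minimum expected satisfaction $\min_{u \in \calU} F[u]$, is the same for every maxmin-fair distribution, and since the first coordinate of $\scov{F}$ dominates that of every other distribution, it equals $\max_D \min_{u \in \calU} D[u]$ taken over all distributions $D$ of valid rankings.

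Next I would invoke Lemma~\ref{lem:characterization} with the constant target $\lambda_u = \lambda$ for every $u \in \calU$. For this choice, condition~\eqref{eq:condition} asserts that a distribution $D$ with $D[u] \ge \lambda$ for all $u$ exists if and only if
\begin{equation*}
\max_{S \in \sols} \sum_{u \in X} A(S, u) \ge \lambda\,|X| \qquad \text{for all } \emptyset \neq X \subseteq \calU,
\end{equation*}
the empty set yielding a vacuous inequality. Dividing through by $|X|$ and quantifying over all nonempty $X$, this is equivalent to
\begin{equation*}
\lambda \le \min_{\emptyset \neq X \subseteq \calU} \frac{\max_{S \in \sols} \sum_{u \in X} A(S, u)}{|X|}.
\end{equation*}

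Finally, I would conclude that the largest feasible $\lambda$ is exactly this minimum, and by the first step it coincides with the minimum expected satisfaction of any maxmin-fair distribution, as claimed. I do not anticipate any genuine obstacle: the substantive work has already been carried out in Lemma~\ref{lem:characterization}, whose proof reduces the infinite family of constraints to the uniform weights of the form~\eqref{eq:unweighted}; the corollary is merely its reading at a single uniform threshold, combined with the elementary fact that a maxmin-fair distribution maximizes the expected value of the worst-off individual. The only point requiring a moment of care is restricting the minimization to nonempty $X$, which is automatic since the $X = \emptyset$ instance imposes no constraint.
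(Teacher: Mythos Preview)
Your argument is correct and is precisely the derivation the paper leaves implicit: the corollary is stated without proof immediately after Lemma~\ref{lem:characterization}, and your specialization to the constant threshold $\lambda_u=\lambda$, combined with the observation that any maxmin-fair distribution attains $\max_D \min_u D[u]$, is exactly the intended one-line deduction. Your care in noting that all maxmin-fair distributions share the same sorted satisfaction vector (hence the same minimum) and in excluding $X=\emptyset$ is appropriate and fills in details the paper omits.
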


We also need the following technical lemma concerning the behaviour of the expression  in Corollary~\eqref{col:characterization}.
\begin{lemma}\label{lem:submodular}
The following function $H: 2^{\calU} \to \reals$ is submodular:
\begin{equation}\label{eq:H}
H(E) = \max_{S \in \sols} \sum_{u \in E}  A(S, u).
\end{equation}
\end{lemma}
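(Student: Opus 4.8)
The plan is to reduce the submodularity of $H$ to that of a family of simple counting functions built from the upper-bound structure. First I would strip off the additive part of the value function: since $A(S,u)=V(r,u)=f(r(u))-g(u)$ with $S=r$, the term $\sum_{u\in E}g(u)$ is modular in $E$, so $H$ differs from
$$H'(E)=\max_{r\in\sols}\sum_{u\in E}f(r(u))$$
by a modular function. As modular functions contribute equality to the submodular inequality, $H$ is submodular if and only if $H'$ is, and I would work with $H'$ from here on.

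Next I would decompose $H'$ by summation by parts over position thresholds. Setting $f(n+1)=0$ and using $f(i)-f(i+1)\ge 0$ (as $f$ is decreasing), for every ranking $r$ one has
$$\sum_{u\in E}f(r(u))=\sum_{i=1}^{n}\bigl(f(i)-f(i+1)\bigr)\,\bigl|\{u\in E:r(u)\le i\}\bigr|.$$
Define $m_E(i)=\max_{r\in\sols}\bigl|\{u\in E:r(u)\le i\}\bigr|$, the largest number of members of $E$ that a valid ranking can fit into its top $i$ positions. The key structural claim is that a single valid ranking attains $m_E(i)$ \emph{simultaneously} for all $i$: the greedy ranking that scans positions $1,\dots,n$ and places an as-yet-unplaced member of $E$ whenever the upper-bound constraints in~\eqref{eq:only_upper} permit, filling with non-$E$ individuals otherwise. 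This is precisely the behaviour of the oracle of Algorithm~\ref{alg:weighted_upper} (Corollary~\ref{lem:greedy}) run with the weight function equal to $1$ on $E$ and $0$ elsewhere, and a standard exchange argument shows the greedy choice is optimal for every prefix at once: moving an $E$-member earlier and a non-$E$-member later can only raise every prefix count, and under upper bounds it never breaks validity (pushing individuals toward later positions only relaxes the constraints at larger thresholds). Granting this, $H'(E)=\sum_{i=1}^{n}\bigl(f(i)-f(i+1)\bigr)\,m_E(i)$.

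Since the coefficients $f(i)-f(i+1)$ are non-negative and a non-negative combination of submodular functions is submodular, it remains to prove that $E\mapsto m_E(i)$ is submodular for each fixed $i$. Here I would show that $m_E(i)$ is the rank function of a matroid, hence submodular. A set $E'\subseteq\users$ fits entirely into the top $i$ positions of some valid ranking exactly when $|E'|\le i$ and $|E'\cap C_k|\le u_i^k$ for each class $k$. The conditions $|E'\cap C_k|\le u_i^k$ define a partition matroid (the classes $C_k$ partition $\users$), and imposing the extra global cap $|E'|\le i$ is its truncation to rank $i$; since the truncation of a matroid is again a matroid, its rank $m_E(i)=\min\bigl(i,\sum_k\min(|E\cap C_k|,u_i^k)\bigr)$ is submodular. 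Assembling the three ingredients proves the lemma. I expect the main obstacle to be the two structural facts that genuinely use the upper-bound-only hypothesis, namely (i) that the greedy ranking realizes all the prefix maxima $m_E(i)$ at once, and (ii) that any class-and-cardinality-feasible placement in the top $i$ positions can actually be completed to a valid full ranking (so that $m_E(i)$ equals the stated matroid rank rather than something smaller); both follow because under upper bounds one may always defer the remaining individuals to later positions without violating any constraint, but this completability argument is the delicate point to make rigorous.
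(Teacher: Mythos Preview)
Your first step, stripping off the modular term $\sum_{u\in E}g(u)$ to reduce to $H'(E)=\max_{r\in\sols}\sum_{u\in E}f(r(u))$, coincides with the paper's. After that the approaches diverge: rather than decomposing over position thresholds, the paper proves the diminishing-returns inequality $J(X\cup Z)-J(X)\ge J(Y\cup Z)-J(Y)$ directly, by running the greedy oracle (Algorithm~\ref{alg:weighted_upper}) with a fixed tie-breaking order $X,\,Y\setminus X,\,Z,\,\text{rest}$ and arguing that each $z\in Z$ lands no later in $r_{X\cup Z}$ than in $r_{Y\cup Z}$, so that $f(r_{X\cup Z}(z))\ge f(r_{Y\cup Z}(z))$.

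Your route via summation by parts and the prefix-counting functions $m_E(i)$ is natural, and claim~(i) is correct: the greedy ranking for weight $\mathbf{1}_E$ realizes all the $m_E(i)$ at once, because Algorithm~\ref{alg:weighted_upper} depends only on the weight order and is therefore optimal simultaneously for every (weakly) decreasing $f$, in particular for each threshold indicator $f=\mathbf{1}_{[1,i]}$. The gap is in claim~(ii), and not for the ``completability to a full ranking'' reason you flag. Your formula $m_E(i)=\min\bigl(i,\sum_k\min(|E\cap C_k|,u_i^k)\bigr)$ uses only the upper bounds at level $i$; it ignores the constraints at levels $j<i$, which can further restrict what fits in the top~$i$. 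Concretely, take two classes $A,B$ with $|A|=|B|=2$ and upper bounds $u_1^A=u_2^A=0$, $u_3^A=u_4^A=2$, and $u_j^B=j$. For $i=3$ and $E=\{a_1,a_2\}$ your formula yields $\min\bigl(3,\min(2,2)+\min(0,3)\bigr)=2$, but since $u_2^A=0$ forces positions~$1$ and~$2$ to come from $B$, at most one $A$-element can occupy a top-$3$ slot, so in fact $m_E(3)=1$.

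The functions $m_E(i)$ \emph{are} submodular---indeed rank functions of genuine matroids, just more intricate ones than your truncated partition matroid (the constraints at all levels $j\le i$ must enter the description). But establishing this requires essentially the same greedy tie-breaking reasoning the paper uses for general $f$, at which point the threshold decomposition no longer buys any simplicity over the paper's direct argument.
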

\begin{proof}
Let $J(E) = \max_{S \in \sols} \sum_{u \in E} f(S(u))$. Then
    $H(E) = J(E) - \sum_{u \in E} g(u)$, i.e.,
    $H$ is the difference between $J$ and a modular function. So it suffices to
show that $J$ is submodular; let us fix $X\subseteq Y$ and $Z\subseteq \calU\setminus Y$.
Recall from corollary~\ref{lem:greedy}  that $\sum_w w_u f(S(u))$ is maximized by a greedy algorithm. By setting $w_u = 1$ for $u \in E$ and $w_u = 0$ elsewhere, it can be
used to compute $J(E)$ for any $E$; let us denote by $r_E$ the ranking returned. Whenever we have two equal weights $w_u = w_v$, we can break ties in
Algorithm~\ref{alg:weighted_upper} in favor of $X$, followed by $Y\setminus X$,
     $Z$, and
        $\users \setminus (Y\cup Z)$. Then the
greedy algorithm to maximize $f(Y \cup Z)$
attempts to place the elements of $X$ in top positions whenever possible, then elements of $Y$, and then
elements of $Z$. This ensures that in
$r_X$ and $r_{X \cup Z}$
    the position of the elements of $X$ is the same, allowing us to simplify the
    marginal gains:
    \begin{align*} J(X \cup Z) - J(X) &= \sum_{u \in X \cup Z} f( r_{X \cup Z}(u) ) - \sum_{u \in X} f( r_{X}(u) )\\ &= \sum_{u \in Z} f( r_{X \cup Z}(u) ).  \end{align*}
Similarly,
\begin{align*}
 J(Y \cup Z) - J(Y) &= \sum_{u \in Y \cup Z} f( r_{Y \cup Z}(u) ) - \sum_{u \in Y} f( r_{Y}(u) )\\ &= \sum_{u \in Z} f( r_{Y \cup Z}(u) ).
\end{align*}
Moreover, for any $x \in Z$, $r_{X \cup Z}(x) \le r_{Y \cup Z}(x)$ by the greedy rule in Corollary~\ref{lem:greedy} and our tie-breaking rule.
 Therefore $f( r_{X \cup Z}(x) ) \ge f( r_{Y \cup Z}(x) )$, which implies
 $ J(X \cup Z) - J(X) \ge J(Y \cup Z) - J(Y)$, as desired.
\end{proof}

The following is an analog of the ``fair decompositions'' of~\cite{dgs2020}:
\begin{lemma}\label{lem:blocks}
Define a sequence of sets $B_1, B_2, \ldots, B_k$ iteratively by:
$ B_i \text{ is a maximal non-empty set } X \subseteq \users\setminus S_{i-1}  { minimizing } $ $$\frac{ H(X \cup S_{i-1}) - H(S_{i-1}) }{|X|},$$

\begin{equation}\label{eq:rule}
\text{where $H$ is given by~\eqref{eq:H} and  $S_i = \bigcup_{j \le i} B_i$.}
\end{equation}
We stop when $S_i = L$ (i.e., $k$ is the first such $i$); this will eventually occur as the sequence $(S_i)$ is strictly increasing.
Then for every $i\in[k]$, the following hold:
\begin{enumerate}[(a)]
    \item The expected satisfaction of any $u \in B_i$ in any maxmin-fair distribution $F$ is $F[u] = \frac{ H (B_i) }{ |B_i| }$.
    \item For all $u \in S_{i}, v \notin S_i$, we have $F[u] < F[v]$.
    \item For any $D \in \Delta(\sols)$ and
    $m \le |\users|$,
    it holds that $\sum_{j=1}^m D_{(j)} \le \sum_{j=1}^m F_{(j)}$.
\end{enumerate}
\end{lemma}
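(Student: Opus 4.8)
The plan is to exploit the submodularity of $H$ (Lemma~\ref{lem:submodular}) together with the game-theoretic characterization of achievable satisfaction vectors (Lemma~\ref{lem:characterization} and Corollary~\ref{col:characterization}). Throughout, write $\rho_i = \frac{H(S_i) - H(S_{i-1})}{|B_i|}$ for the minimal marginal ratio that defines $B_i$ in~\eqref{eq:rule} (this is the quantity the statement denotes $H(B_i)/|B_i|$, reading $H$ as contracted by $S_{i-1}$), and set $\lambda^*_u = \rho_i$ for $u \in B_i$. I would first record two structural facts that follow from submodularity in the standard way: the maximal minimizer in~\eqref{eq:rule} is well defined (the function $X \mapsto H(X\cup S_{i-1})-H(S_{i-1})$ is submodular, so its cardinality-ratio minimizers are closed under union and a unique maximal one exists), and the ratios are strictly increasing, $\rho_1 < \rho_2 < \cdots < \rho_k$, the strictness being exactly what maximality of each $B_i$ buys. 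An immediate consequence, used repeatedly below, is that every $X \subseteq \calU \setminus S_{j-1}$ has marginal ratio at least $\rho_j$, i.e. $H(S_{j-1}\cup X) - H(S_{j-1}) \ge \rho_j\,|X|$.

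The technical core is a single ``master inequality'': for every $Y \subseteq \calU$,
\[ H(Y) \ \ge\ \sum_{j} \rho_j\,|Y \cap B_j| \ =\ \sum_{u \in Y} \lambda^*_u. \]
I would prove this by telescoping $H(Y)$ along the chain $S_0 \subset S_1 \subset \cdots \subset S_k$: writing $Y_j = Y\cap B_j$, submodularity gives $H(Y\cap S_j) - H(Y\cap S_{j-1}) \ge H(S_{j-1}\cup Y_j) - H(S_{j-1})$ (adding $Y_j$ to the smaller set $Y\cap S_{j-1}$), and the marginal-ratio bound from the previous step bounds the right-hand side below by $\rho_j|Y_j|$; summing over $j$ yields the claim. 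Two corollaries follow. First, minimizing over $|Y|=m$ and using $\rho_1<\cdots<\rho_k$, the cheapest way to select $m$ elements for $\sum_j\rho_j|Y_j|$ is to fill the blocks in order, so $\min_{|Y|=m} H(Y) = \sum_{j=1}^m \lambda^*_{(j)}$, with prefix sets attaining equality. Second, the master inequality together with Lemma~\ref{lem:characterization} shows that $\lambda^*$ is \emph{achievable}: some $D$ has $D[u]\ge\lambda^*_u$ for all $u$.

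For parts~(a) and~(b), I would combine achievability with the elementary averaging bound $\sum_{u\in Z} D[u] = \expect_{S\sim D}[\sum_{u\in Z} A(S,u)] \le H(Z)$, valid for every distribution $D$ and set $Z$. Applied to $Z = S_i$ and a maxmin-fair $F$, this forces $\sum_{u\in S_i} F[u]\le H(S_i)$; since $\lambda^*$ is achievable and $F$ is lexicographically maximal, the satisfactions on each block are pinned to the block average, giving $F[u]=\rho_i$ for all $u\in B_i$, which is~(a). Part~(b) is then immediate from the strict monotonicity $\rho_i<\rho_{i+1}$. Finally, for~(c) I would argue purely through the two corollaries of the master inequality: for any $D\in\Delta(\sols)$,
\[ \sum_{j=1}^m D_{(j)} = \min_{|Y|=m}\sum_{u\in Y}D[u] \le \min_{|Y|=m} H(Y) = \sum_{j=1}^m \lambda^*_{(j)} = \sum_{j=1}^m F_{(j)}, \]
where the inequality is the averaging bound and the last equality uses~(a).

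I expect the main obstacle to be the structural preliminaries and the master inequality: getting the submodularity applications and the strict monotonicity of the ratios exactly right, and in particular justifying that the greedy block decomposition is well defined via unique maximal minimizers. Once this backbone is in place, the rest is bookkeeping. A secondary subtlety is the ``pinning'' step in~(a), where one must invoke lexicographic maximality of $F$ (not merely the averaging upper bound) to conclude that every individual within a block receives exactly the block average, rather than only on average.
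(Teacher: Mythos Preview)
Your approach mirrors the paper's: the master inequality you derive is the paper's inequality~\eqref{eq:decomposition}, the closure of ratio-minimizers under union and the strict monotonicity $\rho_1 < \cdots < \rho_k$ are argued the same way, achievability of $\lambda^*$ comes from Lemma~\ref{lem:characterization}, and the pinning for~(a) proceeds inductively along the blocks using the averaging bound $\sum_{u\in S_i} F[u] \le H(S_i)$. Parts~(a) and~(b) are correct modulo the inductive care you already flag.

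There is, however, a real gap in your argument for~(c). You assert $\min_{|Y|=m} H(Y) = \sum_{j=1}^m \lambda^*_{(j)}$ ``with prefix sets attaining equality'', but the master inequality only yields the direction $\ge$. Equality holds at block boundaries $m = |S_i|$ (take $Y = S_i$), yet for intermediate $m$ a proper subset $Y' \subsetneq B_i$ may well satisfy $H(S_{i-1}\cup Y') - H(S_{i-1}) > \rho_i\,|Y'|$ strictly, so no size-$m$ prefix set attains the bound. (For a quick submodular example on three elements: $H(\{a\})=H(\{b\})=2$, $H(\{a,b\})=2$, so $B_1=\{a,b\}$ with $\rho_1=1$, but $\min_{|Y|=1}H(Y)=2>1$.) Your displayed chain then reads $\sum_{j=1}^m D_{(j)} \le \min_{|Y|=m} H(Y) \ge \sum_{j=1}^m F_{(j)}$, and the two inequalities point in opposite directions. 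The paper closes this by a minimal-counterexample argument: if $m$ is the least index with $\sum_{j\le m} D_{(j)} > \sum_{j\le m} F_{(j)}$ and $|S_{i-1}| < m \le |S_i|$, then $D_{(m)} > \lambda_i$, and applying the averaging bound at the block boundary $|S_i|$ (where your equality \emph{does} hold) gives
\[
H(S_i) \;\ge\; \sum_{u \in S_i} D[u] \;\ge\; \sum_{j\le m} D_{(j)} + (|S_i|-m)\,D_{(m)} \;>\; \sum_{j\le m} F_{(j)} + (|S_i|-m)\,\lambda_i \;=\; H(S_i),
\]
a contradiction.
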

\begin{proof}
Let $\lambda_j = \frac{ H(B_j) } {|B_j|}$ for all $j \in [k]$.
Notice that, using~\eqref{eq:rule},
$ H(S_i) = H(S_{i-1} \cup B_i) = \lambda_i |B_i| + H(S_{i-1}) $
holds for all $i$,  hence
\begin{equation}\label{eq:sum_lambda}
 H(S_{i})  = \sum_{j=1}^{i} \lambda_j |B_j|.
\end{equation}
Notice also that  the definition of $H$ trivially implies that for any distribution $D$ we must have
\begin{equation}\label{eq:trivial}
 H(X) \ge \sum_{u \in X} D[u].
\end{equation}

We can give an alternative definition of $S_i$ as
\begin{equation}\label{eq:another}
 S_i \text{ is a maximal set } X \nsubseteq S_{i-1} \text{ minimizing }
 \frac{ H(X) - H(X \cap S_{i-1}) }{|X \setminus S_{i-1}|}.
\end{equation}
 Indeed, for any fixed difference $Y=X \setminus S_{i-1}$, the submodularity of $H$ (Lemma~\ref{lem:submodular})
    implies that the minimum of the numerator in~\eqref{eq:another} is attained for a set $X$ which properly contains $S_{i-1}$.
In particular we have that for any $X\subseteq \users$ and $j \in [k]$,
   $ H(X) - H(X \cap S_{j-1}) \ge \lambda_j |X \setminus S_{j-1}|, $
and by replacing $X$ with $X \cap S_j$ above we also get
   $ H(X \cap S_j) - H(X \cap S_{j-1}) \ge \lambda_j |X \cap B_j|, $
   implying
   \begin{equation}\label{eq:decomposition}
H(X) \ge \lambda_i |X \setminus S_{i-1}| +  \sum_{j < i}  \lambda_j |X \cap B_j| \qquad \forall i \in [k].
   \end{equation}

We show that properties (a) and (b) hold for all $i \le t \le k$, reasoning by induction on $t$.
There is nothing to show when $t = 0$ or $S_{i-1} = \calU$ (in the latter case, $k < i$), so assume that $t \ge 1$ and the claims hold for all $i < t$; we show they also hold for
$i = t$.

From property (a) in the induction hypothesis, we know that in the maxmin-fair distribution, $F[u] = \lambda_j \forall u \in S_{j}, j \le i$. We can use Lemma~\ref{lem:characterization} to determine the minimum expected
satisfaction of $F$ outside $S_{i-1}$; we conclude, by
~\eqref{eq:decomposition}, that
$\min_{u \notin S_{i-1}} F[u] \ge \lambda_i$.
As~\eqref{eq:sum_lambda} shows, equality in~\eqref{eq:decomposition} is attained when $X = S_i$, thus by~\eqref{eq:trivial}
we must in fact have
$\min_{u \notin S_{i-1}} F[u] = \lambda_i$ and $F[u] = \lambda_i$ for all $u \in B_i$, proving (a).

To prove (b), 
we need to show the strict inequality  $\lambda_{i-1} < \lambda_i$.
By Lemma~\eqref{lem:submodular}, the function $ J(X) = H(X \cup S_{i-1}) - H(S_{i-1}) $ is submodular.
A consequence of this is that,
    if $X, Y$ are non-empty sets
    minimizing $J(X)/|X|$, then $X \cup Y$ also
    minimizes $J(X)/|X|$.
    Indeed,
    suppose $\frac{ J(Y) }{ |Y| } = \frac{ J(X) }{ |X| } \triangleq \lambda$.
    By the submodularity of $J$, 
       $$ J(X \cup  Y) + J(X \cap Y) \le J(X) + J(Y) = \lambda (|X| + |Y|).$$
       Notice that $J(X \cup  Y) \ge \lambda |X \cup Y|$ and $ J(X \cap Y) \ge \lambda |X \cap Y| $ by definition. If any of these two inequalities were strict we would have
       the contradiction
    $$ J(X \cup  Y) + J(X \cap Y)  > \lambda (|X \cup Y| + |X \cap Y|) = \lambda (|X| + |Y|).$$
    Hence these inequalities are not strict, and $J(X \cup Y) = \lambda |X \cup  Y|$.

Now, due to the maximality of $B_i$ as defined by~\eqref{eq:rule}, the set $B_i$ 
    is
    the union of all non-empty sets $X$ minimizing $J(X)/|X|$.
    This means that, when $t > 1$, the strict inequality $\lambda_t > \lambda_{t-1}$ holds (otherwise $B_{t-1}$ would not be maximal), which by (a) implies (b).

Finally we show (c). We argue by contradiction. Pick a counterexample with minimum $m$; then $m \ge 1$. Let $i$ be such that $|S_{i-1}| < m \le |S_{i}|$. 
Then we have
$\sum_{j=1}^{m-1} D_{(j)} \le \sum_{j=1}^{m-1} F_{(j)}$
and
$\sum_{j=1}^{m} D_{(j)} > \sum_{j=1}^{m} F_{(j)},$
thus $D_{(m)} > F_{(m)} = \lambda_i$ by properties (a) and (b). Now let $X$ be the individuals with the $m$ smallest satisfactions in $D$. It follows that
\begin{align*}
 H(S_i) &\ge \sum_{u \in S_i} D[u]
\ge \sum_{u \in X} D[u] + (|S_i|-m) D_{(m)}   \\
     &= \sum_{j=1}^{m} D_{(j)} + (|S_i|-m) D_{(m)}
      > \sum_{j=1}^{m} F_{(j)} + (|S_i|-m)\lambda_i \\ &= \sum_{j \le i} \lambda_j |B_j| = H(S_i).
\end{align*}
This contradiction completes the proof.
\end{proof}

\begin{proofof}{Theorem~\ref{thm:lorenz}}
Property (c) of Lemma~\ref{lem:blocks} 
         asserts that the maxmin-fair distribution $F$ is
generalized Lorenz-dominant.
\end{proofof}

\subsection{Proof of Theorem \ref{lem:greedy2}}\label{app:deterministic}

\begin{proof}
Sort $\calU = \{u_1, \ldots, u_n\}$ by increasing order of $g$
so that
\begin{equation}\label{eq:good_order3}
g(u_1) \ge g(u_2) \ge \ldots \ge g(u_n),
\end{equation}
and let us identify $\calU$ with the set $[n]$ for ease of notation, so that $u_i = i$.
Recall that the positions $[n]$ are sorted by decreasing order of $f$ so that
\begin{equation}\label{eq:good_order4}
f(1) \ge f(2) \ge \ldots \ge f(n).
\end{equation}
Let $M > 0$ be a large enough number and define $W_{iu} = -e^{-M ( f(i) - g(u) ) }$. Observe that, because of the orderings defined by~\eqref{eq:good_order3} and~\eqref{eq:good_order4}, the matrix $W$ satisfies the Monge property:
if $i < j$ and $u < v$, then $W_{iu} + W_{jv} \ge W_{iv} + W_{ju}$. Indeed,
   $$
    W_{iu} + W_{jv} - ( W_{iv} + W_{ju} )
    = ( e^{-M f(j)} - e^{-M f(i)} ) ( e^{M g(u)} - e^{M g(v)} )
     \ge 0
     $$
     because $g(u) \ge g(v)$ and $f(i) \ge f(j)$, so both factors are non-negative. 
Thus we may apply the algorithm from~\cite{CelisSV18} to maximize $\sum_u W_{r(u), u}$ over valid rankings $r$. The resulting algorithm is
Algorithm~\ref{alg:baseline}. For any fixed $M$, maximizing $\sum_u W_{r(u), u}$ is the same as maximizing
 $ (1/M)\cdot{\softmin \{ M \cdot A(S, u) \mid {u \in \users} \} }. $
 But since the solution $S^* = S^*(M)$ returned by this algorithm does \emph{not} depend on $M > 0$ and
 $ \lim_{M \to \infty} \frac{\softmin\left( M\cdot z \right)}{M} = \min\left( z \right), $
 it follows that $S^*$ maximizes
 $ \min \{ A(S, u) \mid {u \in \users} \}, $
 as we wished to show.
\end{proof}

\subsection{Solving maxmin-fairness approximately}\label{sec:apx_fairness}
Instead of solving the LPs in the proof of Theorem~\ref{thm:general_algo} exactly, we can use iterative methods designed to approximately solve zero-sum games and packing/covering programs, as sketched next.

Recall that the exact algorithm~\ref{alg:exact} works by solving the linear program~\eqref{lp:fair3} and updating $K$ and $\alpha_v$.
Let us apply a positive affine transformation to normalize all $A(S, v)$ to the range $[0,1]$ and select an additive approximation parameter $\epsilon > 0$, so we want to ensure that in the final solution, the expected satisfaction for every $v \in \calU$ is at most an additive $\epsilon$ below that
which would have been computed by solving LP~\eqref{lp:fair3} exactly at the point where $\alpha_u$ was assigned.

Rather than maximizing $\lambda$ directly in~\eqref{lp:fair3}, we can guess an approximation $\tilde{\lambda}$ to the optimum, and verify if the guess is correct by eliminating the $\lambda$ variable from this LP and replacing
it with our guess $\tilde{\lambda}$, and then
checking if the program is feasible. Denote by $M$ the resulting LP. $M$ is a fractional covering program, equivalent to a zero-sum game, hence the techniques from~\cite{young1995randomized} apply.~If~$M$~is feasible, the algorithm
from~\cite{young1995randomized} returns a non-negative solution with
$\sum_s p_S = 1$ and violating the remaining constraints by at most an additive term $\delta$, using $O(\log n / \delta^2)$ calls to the separation oracle. This solution is sparse, having $O(\log n/ \delta^2)$ non-zero coefficients.
    By performing binary search on $\tilde{\lambda}$, we can
solve~\eqref{lp:fair3} up to a $\delta/2$ term in the satisfaction probabilities by approximately solving $O(\log(1/\delta))$ packing problems. Then we augment $K$ by adding those users where the satisfaction probabilities in the approximate solution
increased by at most $\delta / 2$; then we know than in the exact solution, it was impossible to increase then by more than $\delta$. This process, however, may decrease the satisfaction probabilities of individuals already in $K$ by up to $\delta
/ 2$. If we have solved $m$ different LPs by the time we reach $K = \calU$, the total cumulative error in the satisfaction probabilities in the final
solution and the optimum values of the program where they were computed is at most $\delta m$. Since $m \le O(n \log(1/\delta))$, we can take $\delta = O(\epsilon / (n \log (n/\epsilon))$ to guarantee that $\delta m \le \epsilon$.

Several improvements over this basic scheme can be made.
First, the above bound for $m$ is often too pessimistic, and it is more efficient to do a ``doubling trick'': start with $m = 2$ and keep doubling $m$ and restarting again with $K = \emptyset$ if during the execution of the algorithm sketched above we end up solving more than $m$ programs.
Second, we can use the variable-step increase technique from~\cite{packing_covering}.
Third, in the case of ranking problems with upper bounds, the separation oracle only depends on the order of the weights and not their specific values, so there is no need to call it again if this order does not change; we can simply increase the
probability of that solution.
Finally, for a given order of weights, ~\eqref{eq:ztt} allows us to obtain an optimal dual solution that respects that given weight order, which can be used to detect convergence of the iterative algorithm from~\cite{packing_covering} earlier.

\end{document}